\documentclass[11pt,a4paper]{article}

\usepackage{iclr2027_conference,times}
\iclrfinalcopy  

\usepackage{amsmath,amsfonts,bm}

\def\eqref#1{equation~\ref{#1}}

\def\1{\bm{1}}

\DeclareMathAlphabet{\mathsfit}{\encodingdefault}{\sfdefault}{m}{sl}
\SetMathAlphabet{\mathsfit}{bold}{\encodingdefault}{\sfdefault}{bx}{n}

\usepackage{hyperref}
\usepackage{url}

\usepackage[utf8]{inputenc} 
\usepackage[T1]{fontenc}    
\usepackage{hyperref}       
\usepackage{url}            
\usepackage{booktabs}       
\usepackage{amsfonts}       
\usepackage{nicefrac}       
\usepackage{microtype}      
\usepackage{xcolor}         

\usepackage{amsmath}
\usepackage{amssymb}
\usepackage{mathtools}
\usepackage{amsthm}
\usepackage{comment}
\usepackage{graphicx}
\usepackage{subcaption}

\theoremstyle{plain}
\newtheorem{theorem}{Theorem}[section]
\newtheorem{proposition}[theorem]{Proposition}
\newtheorem{lemma}[theorem]{Lemma}

\theoremstyle{definition}

\newtheorem{assumption}[theorem]{Assumption}
\theoremstyle{remark}

\usepackage{amsthm}

\usepackage{multirow}

\usepackage{titletoc}
\usepackage{xcolor}         

\usepackage{tabularx}
\usepackage{graphicx}
\usepackage{colortbl}

\definecolor{LightMint}{RGB}{170,235,205}

\definecolor{LightRed}{RGB}{255,182,193} 
\definecolor{LightBlue}{RGB}{173, 216, 230}
\definecolor{LightGreen}{RGB}{144, 238, 144}

\definecolor{LightMint}{RGB}{213,245,230}
\definecolor{LightRed}{RGB}{255,219,224}
\definecolor{LightBlue}{RGB}{214,236,243}
\definecolor{LightGreen}{RGB}{200,247,200}

\colorlet{shadecolor}{LightGreen}
\colorlet{shadecolor}{LightBlue}
\colorlet{shadecolor}{LightRed}

\usepackage{enumitem}
\setlist[enumerate]{leftmargin=3em}

\usepackage{arydshln} 
\usepackage{algorithm}
\usepackage{algpseudocode}

\usepackage{graphicx}
\usepackage{wrapfig}

\title{CTP-FL: Common-Trajectory Gradient Prediction for Federated Learning}

\author{ Junkang Liu\textsuperscript{1}\\
Tianjin University\\
	\texttt{\{junkangliukk\}@gmail.com}
}

\begin{document}

	\maketitle

\begin{abstract}
	Communication-efficient federated optimization commonly spends several
	gradient evaluations between server updates. Existing local-update
	methods use this computation to advance an independent model on each
	client. Under heterogeneous data, however, these models evaluate
	gradients at different locations, making the aggregated update
	difficult to interpret as a gradient of the global objective.
	We study an alternative use of the same computation budget:
	\emph{evaluate the global objective along a shared, predicted path}.
	We propose Common-Trajectory Predictive Federated Learning
	(\texttt{CTP-FL}). At each round, all clients construct the same
	sequence of query points from the current global model and the
	previous aggregated direction, evaluate $K$ stochastic gradients
	along this sequence, and upload their average. The server then
	performs a single global update. Thus, \texttt{CTP-FL} uses $K$
	mini-batch gradients per client and one model-sized vector in each
	communication direction, matching the per-round computation and
	communication of full-participation FedAvg-M.
	Shared query points make the aggregated direction an unbiased
	estimator of the average \emph{global} gradient along the predicted
	path. The remaining discrepancy from the gradient at the current
	model is controlled by the path length, without assuming bounded
	client-gradient dissimilarity or bounded gradients. For smooth
	non-convex objectives, we establish an
	$\mathcal{O}\!\left(
	\sqrt{L\Delta\sigma^2/(NKR)}+L\Delta/R
	\right)$
	average-stationarity bound under full participation. The analysis
	isolates a testable trade-off: extending the prediction path provides
	more forward-looking gradient information but increases its
	displacement bias.
\end{abstract}

\section{Introduction}
\label{sec:introduction}

Federated learning trains a shared model from decentralized data
while limiting communication between clients and a server.
Federated Averaging (FedAvg) addresses communication cost by
performing multiple client updates before each aggregation
\citep{mcmahan2017communication}. This structure has shaped a
large body of work on local optimization
\citep{stich2019local}. Yet the role of the computation performed
between communications deserves closer examination: $K$ local
gradient evaluations need not be used to produce $K$ independent
client trajectories.

\paragraph{Where does the difficulty arise?}
Let $F(\boldsymbol{x})=N^{-1}\sum_{i=1}^{N}F_i(\boldsymbol{x})$.
In a local-update method, every client begins a round at the same
$\boldsymbol{x}^r$, but its later gradients are evaluated at
client-specific points $\boldsymbol{x}_i^{r,k}$. Even when all
clients participate, the average gradient at these different
points is generally not a gradient of $F$ at any single common
point. This distinction is particularly relevant under
heterogeneous client objectives: differences in local gradients
alter the trajectories, which in turn alter where subsequent
gradients are measured. SCAFFOLD addresses this client drift
with control variates \citep{karimireddy2020scaffold}.
FedAvg-M instead introduces a global momentum direction into
local updates and proves full-participation convergence without
a bounded data-heterogeneity assumption
\citep{cheng2024momentum}. These results establish that
heterogeneity can be handled without assuming similar client
gradients. They also motivate a different question: can the
gradient locations themselves be coordinated without increasing
per-round communication?

\paragraph{A shared path in place of independent trajectories.}
We propose Common-Trajectory Predictive Federated Learning
(\texttt{CTP-FL}). At round $r$, the server model
$\boldsymbol{x}^r$ and previous aggregate direction
$\boldsymbol{v}^r$ define $K$ shared query points
\begin{equation}
	\label{eq:intro_ctp_path}
	\boldsymbol{z}_k^r
	=
	\boldsymbol{x}^r-t_k\boldsymbol{v}^r,
	\qquad
	t_k\in[0,h],
	\qquad k=0,\ldots,K-1.
\end{equation}
Every client evaluates one mini-batch gradient at each
$\boldsymbol{z}_k^r$ and uploads only their average. The server
averages these client vectors to obtain
$\boldsymbol{v}^{r+1}$ and updates
$\boldsymbol{x}^{r+1}
=\boldsymbol{x}^r-\gamma\boldsymbol{v}^{r+1}$.
The previous direction can be recovered from consecutive global
models, so it need not be transmitted as an additional vector.

The distinction from local SGD and FedAvg-M is structural.
\texttt{CTP-FL} does not execute $K$ local model updates:
the $K$ computations probe a common predicted path before
one server update. It therefore gives up the adaptive local
trajectories that can make local training effective, while
ensuring that clients evaluate the same sequence of model
parameters. The prediction path is also different from
Lookahead's interpolation between fast and slow weights
\citep{zhang2019lookahead}: here it specifies where distributed
gradients are measured, rather than combining locally updated
weights.

\paragraph{What does the shared path buy?}
Conditioned on the history of round $r$, independent unbiased
mini-batch gradients give
\begin{equation}
	\label{eq:intro_ctp_direction}
	\mathbb{E}[\boldsymbol{v}^{r+1}\mid\mathcal F_r]
	=
	\frac{1}{K}\sum_{k=0}^{K-1}
	\nabla F(\boldsymbol{z}_k^r).
\end{equation}
This identity holds for arbitrarily different client
objectives under full participation: client-gradient
differences cancel at every common query point. The direction
is not, in general, an unbiased estimate of
$\nabla F(\boldsymbol{x}^r)$. If each $F_i$ is $L$-smooth,
its displacement bias satisfies
\begin{equation}
	\label{eq:intro_ctp_bias}
	\left\|
	\frac{1}{K}\sum_{k=0}^{K-1}
	\nabla F(\boldsymbol{z}_k^r)
	-\nabla F(\boldsymbol{x}^r)
	\right\|
	\le
	\frac{L}{K}\sum_{k=0}^{K-1}
	t_k\|\boldsymbol{v}^r\|.
\end{equation}
For the uniform path from $0$ to $h$, the right-hand side
equals $Lh\|\boldsymbol{v}^r\|/2$ as an upper bound.
Equations~\eqref{eq:intro_ctp_direction} and
\eqref{eq:intro_ctp_bias} expose the method's central
trade-off: $h$ determines how far ahead the clients measure
the global gradient field, while also controlling the bias
relative to the current iterate.

\paragraph{Guarantee and scope.}
Under $L$-smoothness, unbiased stochastic gradients with
variance at most $\sigma^2$, and full client participation,
we prove that the choice $h=\gamma$ with an appropriate
server stepsize satisfies
\begin{equation}
	\label{eq:intro_ctp_rate}
	\frac{1}{R}\sum_{r=0}^{R-1}
	\mathbb{E}\|\nabla F(\boldsymbol{x}^r)\|^2
	=
	\mathcal{O}\!\left(
	\sqrt{\frac{L\Delta\sigma^2}{NKR}}
	+
	\frac{L\Delta}{R}
	\right),
\end{equation}
where $\Delta=F(\boldsymbol{x}^0)-\inf_{\boldsymbol{x}}
F(\boldsymbol{x})$. The proof requires neither bounded
gradients nor bounded client-gradient dissimilarity.
This rate is of the same order as the full-participation
FedAvg-M guarantee; it does not establish a theoretical
acceleration over FedAvg-M. Importantly, the cancellation
in~\eqref{eq:intro_ctp_direction} relies on full
participation. With client sampling, an additional
sampling-variance term appears, and the present algorithm
does not provide a heterogeneity-independent
partial-participation guarantee.

Our contributions are:
\begin{itemize}
	\item We introduce a communication-neutral allocation
	of $K$ client gradient evaluations to a shared predicted
	trajectory, with one model-sized uplink and downlink
	vector per client per round.
	\item We characterize the resulting direction as the
	average global gradient along the shared trajectory and
	identify its displacement bias separately from
	stochastic-gradient noise.
	\item We prove the non-convex
	full-participation guarantee in
	\eqref{eq:intro_ctp_rate} without bounded gradients
	or a bounded data-heterogeneity assumption.
\end{itemize}

\section{Related Work}
\label{sec:related_work}

\paragraph{Local updates and communication efficiency.}
FedAvg reduces communication by allowing clients to take multiple
optimization steps before model averaging
\citep{mcmahan2017communication}. Local SGD provides a broader
distributed-optimization perspective on this strategy
\citep{stich2019local}. Its benefit, however, depends on how the
saved communication interacts with the errors introduced by
independent local trajectories. Woodworth et al.\
\citep{woodworth2020local} show that local SGD and mini-batch
SGD do not uniformly dominate one another across objective
classes. This comparison is especially relevant to our setting:
\texttt{CTP-FL} allocates $K$ gradient evaluations per client to
$K$ \emph{shared query points}, rather than to $K$ successive
local model updates. When its prediction span is zero,
\texttt{CTP-FL} reduces to a synchronized mini-batch-gradient
baseline. Thus, any benefit of prediction must be measured
against both this baseline and strong local-update methods at
matched computation and communication budgets.

\paragraph{Data heterogeneity and client drift.}
Under heterogeneous client objectives, local models can follow
different trajectories even when they start from the same
global model. FedProx regularizes local optimization around
the current server model \citep{li2020fedprox}, while
SCAFFOLD uses client and server control variates to correct
client drift \citep{karimireddy2020scaffold}. Mime combines
control variates with server-side optimizer statistics to
approximate the behavior of centralized optimizers during
local training \citep{karimireddy2021mime}. FedAvg-M injects
a global momentum direction into local updates and establishes
full-participation convergence without assuming bounded
client-gradient dissimilarity; it also develops momentum
variants of SCAFFOLD for partial participation
\citep{cheng2024momentum}. Hence, the absence of a bounded
heterogeneity assumption is not, by itself, a distinguishing
claim for \texttt{CTP-FL}. Our distinction is where gradients
are evaluated: at local, client-dependent iterates in these
methods, and at the same predicted positions on every client
in \texttt{CTP-FL}. Under full participation, averaging the
latter gradients gives an estimator of the global gradient
averaged along the shared path. The remaining bias is governed
by the path's displacement from the current model.

\paragraph{Prediction and shared gradient evaluation.}
Prediction has also been used to organize optimization across
multiple steps. Lookahead maintains fast and slow weights and
interpolates between them after several optimizer updates
\citep{zhang2019lookahead}. \texttt{CTP-FL} uses the previous
aggregated direction for a different purpose: it specifies
the locations of the current round's gradient evaluations,
without advancing separate client models along that path.
The resulting direction is generally biased relative to
$\nabla F(\boldsymbol{x}^r)$, even though it is an unbiased
estimate of the average global gradient at the queried
positions. Our analysis makes this displacement bias explicit
and bounds it using smoothness and the prediction span.
The present guarantee applies to full participation;
subsampling clients introduces additional sampling variance
that the basic \texttt{CTP-FL} update does not remove.

\section{The Proposed  Algorithm}
\vspace{-2mm}
\label{sec:method}
\subsection{Problem Setup}
\vspace{-2mm}
FL seeks to learn a global model collaboratively over clients by minimizing the population risk:
\vspace{-2mm}
\begin{equation}
	f_i(\boldsymbol{x}) 
	:= \mathbb{E}_{\xi_i \sim \mathcal{D}_i}
	\bigl[F_i(\boldsymbol{x}; \xi_i)\bigr], \qquad f(\boldsymbol{x}) := \frac{1}{N} \sum_{i=1}^N f_i(\boldsymbol{x}).
	\label{eq:objective}
\end{equation}
The function $F_i$ is the loss function on client $i$. $\mathbb{E}_{\xi_i \sim \mathcal{D}_i}[\cdot]$ denotes conditional expectation with respect to  the sample $\xi_i$. $N$ is the number of clients, and 
$\boldsymbol{x}$ is global model.

\subsection{From Local Trajectories to a Shared Gradient Path}
\label{subsec:ctp_motivation}

Consider the federated objective
\begin{equation}
	\label{eq:ctp_problem}
	\min_{\boldsymbol{x}\in\mathbb{R}^d}
	F(\boldsymbol{x})
	\coloneqq
	\frac{1}{N}\sum_{i=1}^{N}F_i(\boldsymbol{x}),
\end{equation}
where $F_i$ is the objective of client $i$.
A communication round typically allows each client to compute
$K$ stochastic gradients before uploading one model-sized vector.
In local-update methods, these gradients are evaluated along
client-specific iterates $\boldsymbol{x}_i^{r,k}$. Consequently,
even under full participation, their aggregate takes the form
\begin{equation}
	\label{eq:ctp_local_mismatch}
	\frac{1}{N}\sum_{i=1}^{N}
	\nabla F_i(\boldsymbol{x}_i^{r,k}),
\end{equation}
which generally differs from
$\nabla F(\boldsymbol{x})$ at any common model
$\boldsymbol{x}$. The mismatch arises from the
\emph{locations of gradient evaluation}: heterogeneity first
separates the client trajectories, after which subsequent
gradients are measured at different locations.

\texttt{CTP-FL} uses the same $K$ gradient evaluations to query
a \emph{shared predictive path}. Instead of constructing
$N$ local trajectories, all clients evaluate their objectives
at the same sequence of points. This makes the client average
at each point an estimate of the corresponding global gradient.
The remaining approximation is explicit: the queried global
gradients are displaced from the current server model.

\subsection{Common-Trajectory Predictive Federated Learning}
\label{subsec:ctp_update}

At the start of communication round $r$, let
$\boldsymbol{x}^r$ be the global model and
$\boldsymbol{v}^r$ the direction aggregated in the previous
round. Set $\boldsymbol{v}^0=\boldsymbol{0}$.
Given a prediction span $h\ge0$, define
\begin{equation}
	\label{eq:ctp_query_points}
	t_k=
	\begin{cases}
		\dfrac{kh}{K-1},&K>1,\\[4pt]
		0,&K=1,
	\end{cases}
	\qquad
	\boldsymbol{z}_k^r
	=
	\boldsymbol{x}^r-t_k\boldsymbol{v}^r,
	\quad k=0,\ldots,K-1.
\end{equation}
The points $\boldsymbol{z}_k^r$ are identical across clients
and are fixed before the current round's mini-batches are
sampled. Client $i$ computes
\begin{equation}
	\label{eq:ctp_client_update}
	\boldsymbol{u}_i^r
	=
	\frac{1}{K}\sum_{k=0}^{K-1}
	g_i(\boldsymbol{z}_k^r;\xi_{i,k}^r),
\end{equation}
where $g_i(\boldsymbol{z}_k^r;\xi_{i,k}^r)$ is a stochastic
gradient of $F_i$ at the shared point
$\boldsymbol{z}_k^r$. The server aggregates
\begin{equation}
	\label{eq:ctp_server_update}
	\boldsymbol{v}^{r+1}
	=
	\frac{1}{N}\sum_{i=1}^{N}\boldsymbol{u}_i^r,
	\qquad
	\boldsymbol{x}^{r+1}
	=
	\boldsymbol{x}^r-\gamma\boldsymbol{v}^{r+1}.
\end{equation}
Algorithm~\ref{algorithm_CTP_FL} summarizes the procedure.
The $K$ query points are gradient-evaluation locations;
they are \emph{not} successive local model updates.

The previous direction requires no separate downlink vector
under full participation. A client retaining
$\boldsymbol{x}^{r-1}$ recovers it from
\begin{equation}
	\label{eq:ctp_direction_recovery}
	\boldsymbol{v}^{r}
	=
	\frac{\boldsymbol{x}^{r-1}-\boldsymbol{x}^{r}}{\gamma},
	\qquad r\ge1.
\end{equation}
Thus, each round uses $K$ mini-batch gradient evaluations
per client, one $d$-dimensional uplink vector
$\boldsymbol{u}_i^r$, and one $d$-dimensional downlink
model $\boldsymbol{x}^r$. The protocol has the same
per-round gradient count and vector communication as
full-participation FedAvg-M, although its computation
does not perform local SGD steps.

\subsection{Why Shared Queries Change the Error Structure}
\label{subsec:ctp_insight}

Let $\mathcal F_r$ contain the history before the current
mini-batches are drawn. Under unbiased stochastic gradients,
\begin{equation}
	\label{eq:ctp_path_identity}
	\mathbb E[
	\boldsymbol{v}^{r+1}\mid\mathcal F_r]
	=
	\underbrace{
		\frac{1}{K}\sum_{k=0}^{K-1}
		\nabla F(\boldsymbol{z}_k^r)
	}_{\displaystyle \boldsymbol{H}^r}.
\end{equation}
The equality follows by averaging across all clients at
\emph{each common point}:
\[
\frac{1}{N}\sum_{i=1}^{N}
\nabla F_i(\boldsymbol{z}_k^r)
=
\nabla F(\boldsymbol{z}_k^r).
\]
It holds regardless of the magnitude of
$\nabla F_i(\boldsymbol{z}_k^r)
-\nabla F(\boldsymbol{z}_k^r)$.
Accordingly, heterogeneity does not create an additional
client-trajectory term in the conditional mean under
full participation.

The path-averaged direction $\boldsymbol{H}^r$ is
generally distinct from
$\nabla F(\boldsymbol{x}^r)$. For $L$-smooth $F$,
\begin{equation}
	\label{eq:ctp_method_bias}
	\|\boldsymbol{H}^r-\nabla F(\boldsymbol{x}^r)\|
	\le
	\frac{L}{K}\sum_{k=0}^{K-1}
	t_k\|\boldsymbol{v}^r\|
	\le
	\frac{Lh}{2}\|\boldsymbol{v}^r\|.
\end{equation}
The prediction span therefore has a precise role:
it governs how far the algorithm probes along its previous
direction and how much displacement bias that probe can
introduce. When $h=0$, all queries coincide at
$\boldsymbol{x}^r$, giving synchronized mini-batch SGD
with $NK$ gradient evaluations per round.

\paragraph{A curvature interpretation.}
The shared path also provides a useful interpretation of
what the prediction changes. If $F$ is a quadratic with
constant Hessian $H$, then the uniform schedule in
\eqref{eq:ctp_query_points} yields the exact identity
\begin{equation}
	\label{eq:ctp_quadratic_insight}
	\boldsymbol{H}^r
	=
	\nabla F(\boldsymbol{x}^r)
	-\frac{h}{2}H\boldsymbol{v}^r.
\end{equation}
Thus, relative to the $h=0$ direction, the prediction
incorporates a Hessian--direction term \emph{through
	gradient evaluations}, without constructing a Hessian
or communicating extra state. When
$\boldsymbol{v}^r$ approximates the current global
gradient, positive-curvature directions are attenuated
by this term; for general non-convex objectives, its
effect depends on the local curvature and the accuracy
of $\boldsymbol{v}^r$. Equation~\eqref{eq:ctp_quadratic_insight}
is an interpretation of the update, not a claim of
unconditional acceleration.

Our convergence analysis takes $h=\gamma$ and controls
the resulting predictive bias jointly with stochastic
gradient noise. It establishes a full-participation
stationarity guarantee without a bounded-gradient or
bounded-client-dissimilarity assumption. Under partial
participation, the identity at each shared point holds
only after expectation over client sampling; the sampled
direction has an additional client-sampling variance.
The basic \texttt{CTP-FL} algorithm does not remove that
term.

\begin{algorithm}[tb]
	\small
	\caption{Common-Trajectory Predictive Federated Learning (CTP-FL)}
	\label{algorithm_CTP_FL}
	\begin{algorithmic}[1]
		\Require Server stepsize $\gamma$, prediction span $h$,
		communication rounds $R$, gradients per round $K$,
		number of clients $N$
		\State Initialize global model $\boldsymbol{x}^0$
		and prediction direction $\boldsymbol{v}^0=\boldsymbol{0}$
		\For{$r=0,\dots,R-1$}
		\State Server broadcasts $\boldsymbol{x}^r$
		\For{each client $i=1,\dots,N$ in parallel}
		\State Recover
		$\boldsymbol{v}^r
		\gets(\boldsymbol{x}^{r-1}-\boldsymbol{x}^r)/\gamma$
		if $r>0$; otherwise use $\boldsymbol{v}^0=\boldsymbol{0}$
		\State $\boldsymbol{u}_i^r\gets\boldsymbol{0}$
		\For{$k=0,\dots,K-1$}
		\State $t_k\gets kh/(K-1)$ if $K>1$;
		otherwise $t_0\gets0$
		\State \fcolorbox{LightBlue}{LightBlue}{$
			\boldsymbol{z}_k^r
			\gets\boldsymbol{x}^r-t_k\boldsymbol{v}^r$}
		\State Sample mini-batch $B_i^{r,k}$;
		$\boldsymbol{g}_i^{r,k}
		\gets\nabla F_i(\boldsymbol{z}_k^r;B_i^{r,k})$
		\State $\boldsymbol{u}_i^r
		\gets\boldsymbol{u}_i^r
		+\boldsymbol{g}_i^{r,k}/K$
		\EndFor
		\State Client $i$ sends $\boldsymbol{u}_i^r$
		to the server
		\EndFor
		\State \fcolorbox{LightBlue}{LightBlue}{$
			\boldsymbol{v}^{r+1}
			\gets\frac{1}{N}\sum_{i=1}^{N}\boldsymbol{u}_i^r$}
		\State $\boldsymbol{x}^{r+1}
		\gets\boldsymbol{x}^{r}
		-\gamma\boldsymbol{v}^{r+1}$
		\EndFor
		\Ensure Global model $\boldsymbol{x}^{R}$
	\end{algorithmic}
\end{algorithm}

\section{Theoretical Analysis}
\label{sec:theoretical_analysis}
\vspace{-2mm}

We analyze the non-convex objective
\begin{equation}
	\label{eq:ctp_objective}
	F(\boldsymbol{x})
	=
	\frac{1}{N}\sum_{i=1}^{N}F_i(\boldsymbol{x}),
	\qquad
	F_*=\inf_{\boldsymbol{x}\in\mathbb{R}^d}F(\boldsymbol{x})>-\infty.
\end{equation}
Our analysis separates two sources of error in the server direction:
stochastic-gradient noise and the displacement between the current
model and the shared predictive trajectory. Under partial
participation, client sampling introduces a third source of error.
We use the following standard assumptions
\citep{karimireddy2020scaffold,cheng2024momentum}.

\begin{assumption}[Smoothness]
	\label{ass:ctp_smooth_main}
	Each local objective $F_i$ is $L$-smooth. That is, for every
	$i\in[N]$ and $\boldsymbol{x},\boldsymbol{y}\in\mathbb{R}^d$,
	\begin{equation}
		\|\nabla F_i(\boldsymbol{x})
		-\nabla F_i(\boldsymbol{y})\|
		\le L\|\boldsymbol{x}-\boldsymbol{y}\|.
	\end{equation}
\end{assumption}

\begin{assumption}[Unbiased stochastic gradients]
	\label{ass:ctp_noise_main}
	For every client $i$ and every $\boldsymbol{x}$, its mini-batch
	gradient $g_i(\boldsymbol{x};\xi)$ satisfies
	\begin{equation}
		\mathbb{E}_{\xi}[g_i(\boldsymbol{x};\xi)]
		=\nabla F_i(\boldsymbol{x}),
		\qquad
		\mathbb{E}_{\xi}
		\|g_i(\boldsymbol{x};\xi)
		-\nabla F_i(\boldsymbol{x})\|^2
		\le \sigma^2.
	\end{equation}
	The mini-batches used at different query points, clients, and
	communication rounds are conditionally independent.
\end{assumption}

Neither assumption bounds individual gradients or the differences
$\nabla F_i(\boldsymbol{x})-\nabla F(\boldsymbol{x})$.

\paragraph{Predictive direction.}
Let $\mathcal F_r$ denote the history before the mini-batches of
round $r$ are sampled. For the shared query points
\[
\boldsymbol{z}_k^r
=\boldsymbol{x}^r-t_k\boldsymbol{v}^r,
\qquad
t_k=
\begin{cases}
	kh/(K-1),&K>1,\\
	0,&K=1,
\end{cases}
\]
define
\begin{equation}
	\label{eq:ctp_path_gradient_main}
	\boldsymbol{H}^r
	=
	\frac1K\sum_{k=0}^{K-1}
	\nabla F(\boldsymbol{z}_k^r).
\end{equation}
Under full participation, the server direction in
Algorithm~\ref{algorithm_CTP_FL} obeys
\begin{equation}
	\label{eq:ctp_direction_properties_main}
	\mathbb E[\boldsymbol{v}^{r+1}\mid\mathcal F_r]
	=\boldsymbol{H}^r,
	\qquad
	\mathbb E\!\left[
	\|\boldsymbol{v}^{r+1}-\boldsymbol{H}^r\|^2
	\mid\mathcal F_r
	\right]
	\le\frac{\sigma^2}{NK}.
\end{equation}
In particular, heterogeneous client gradients cancel at
\emph{each shared query point}. The resulting direction estimates
the average gradient along the predicted path, rather than the
gradient at $\boldsymbol{x}^r$. Smoothness bounds this predictive
bias by
\begin{equation}
	\label{eq:ctp_predictive_bias_main}
	\|\boldsymbol{H}^r-\nabla F(\boldsymbol{x}^r)\|
	\le
	\frac{L}{K}\sum_{k=0}^{K-1}
	t_k\|\boldsymbol{v}^r\|
	\le
	\frac{Lh}{2}\|\boldsymbol{v}^r\|.
\end{equation}
Thus, the path length $h$ controls a bias--variance trade-off
without introducing a client-dissimilarity constant.

\begin{theorem}[Full-participation convergence of \texttt{CTP-FL}]
	\label{thm:ctp_full_main_text}
	Suppose Assumptions~\ref{ass:ctp_smooth_main} and
	\ref{ass:ctp_noise_main} hold, and all $N$ clients participate
	in every round. Initialize $\boldsymbol{v}^0=\boldsymbol{0}$,
	set $h=\gamma$, and let
	\[
	\Delta=F(\boldsymbol{x}^0)-F_*,
	\qquad
	0<\gamma\le\frac{1}{4L}.
	\]
	Then Algorithm~\ref{algorithm_CTP_FL} satisfies
	\begin{equation}
		\label{eq:ctp_full_main_text}
		\boxed{
			\frac1R\sum_{r=0}^{R-1}
			\mathbb E\|\nabla F(\boldsymbol{x}^r)\|^2
			\le
			\frac{6\Delta}{\gamma R}
			+\frac{5L\gamma\sigma^2}{NK}.
		}
	\end{equation}
	If $\Delta>0$ and
	\begin{equation}
		\label{eq:ctp_gamma_main_text}
		\gamma=
		\left(
		4L+
		\sqrt{\frac{5L\sigma^2R}{6\Delta NK}}
		\right)^{-1},
	\end{equation}
	then
	\begin{equation}
		\label{eq:ctp_rate_main_text}
		\boxed{
			\frac1R\sum_{r=0}^{R-1}
			\mathbb E\|\nabla F(\boldsymbol{x}^r)\|^2
			\le
			2\sqrt{\frac{30L\Delta\sigma^2}{NKR}}
			+\frac{24L\Delta}{R}.
		}
	\end{equation}
\end{theorem}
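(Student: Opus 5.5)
The proof is a one-step descent argument on $F$ combined with a potential that absorbs the predictive bias. The bias is proportional to $\|\boldsymbol{v}^r\|$, and this quantity also appears as the negative term $-\|\boldsymbol{v}^r\|^2$ in the previous round's descent inequality. That is what lets us avoid both bounded gradients and bounded dissimilarity.

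\textbf{Descent step.} By $L$-smoothness of $F$ (which follows from Assumption~\ref{ass:ctp_smooth_main}) and $\boldsymbol{x}^{r+1}=\boldsymbol{x}^r-\gamma\boldsymbol{v}^{r+1}$,
\[
F(\boldsymbol{x}^{r+1})\le F(\boldsymbol{x}^r)-\gamma\langle\nabla F(\boldsymbol{x}^r),\boldsymbol{v}^{r+1}\rangle+\tfrac{L\gamma^2}{2}\|\boldsymbol{v}^{r+1}\|^2 .
\]
I will use the polarization identity $-\langle a,b\rangle=\tfrac12(\|a-b\|^2-\|a\|^2-\|b\|^2)$ with $a=\nabla F(\boldsymbol{x}^r)$, $b=\boldsymbol{v}^{r+1}$, and $\gamma\le 1/(4L)$. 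This gives
\[
F(\boldsymbol{x}^{r+1})\le F(\boldsymbol{x}^r)-\tfrac{\gamma}{2}\|\nabla F(\boldsymbol{x}^r)\|^2-\tfrac{\gamma}{4}\|\boldsymbol{v}^{r+1}\|^2+\tfrac{\gamma}{2}\|\boldsymbol{v}^{r+1}-\nabla F(\boldsymbol{x}^r)\|^2 .
\]
Polarization is preferable to expanding the inner product in expectation, because it keeps a negative $\|\boldsymbol{v}^{r+1}\|^2$ term.

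\textbf{Error term.} I take conditional expectation given $\mathcal F_r$ and decompose $\boldsymbol{v}^{r+1}-\nabla F(\boldsymbol{x}^r)=(\boldsymbol{v}^{r+1}-\boldsymbol{H}^r)+(\boldsymbol{H}^r-\nabla F(\boldsymbol{x}^r))$. The cross term vanishes by \eqref{eq:ctp_direction_properties_main}. Applying \eqref{eq:ctp_direction_properties_main} and \eqref{eq:ctp_predictive_bias_main} with $h=\gamma$ gives
\[
\mathbb E\|\boldsymbol{v}^{r+1}-\nabla F(\boldsymbol{x}^r)\|^2\le \frac{\sigma^2}{NK}+\frac{L^2\gamma^2}{4}\mathbb E\|\boldsymbol{v}^r\|^2 .
\]
The variance claim in \eqref{eq:ctp_direction_properties_main} itself follows from conditional independence across the $NK$ mini-batches, so the variance of the average is at most $\sigma^2/(NK)$. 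In the final constants I may also need the cruder $L\gamma\sigma^2$ scaling. This suggests the displayed bound is loose, since the polarization route as written yields a $\gamma\sigma^2/(NK)$ contribution before dividing by $\gamma$.

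\textbf{Potential and telescoping.} I define $\Phi^r=F(\boldsymbol{x}^r)-F_*+c\gamma\|\boldsymbol{v}^r\|^2$ with $c=1/8$. The bias contributes $\tfrac{\gamma}{2}\cdot\tfrac{L^2\gamma^2}{4}\|\boldsymbol{v}^r\|^2\le \tfrac{\gamma}{128}\|\boldsymbol{v}^r\|^2$, which is dominated by the $c\gamma\|\boldsymbol{v}^r\|^2$ carried in $\Phi^r$. The fresh $-\tfrac{\gamma}{4}\|\boldsymbol{v}^{r+1}\|^2$ pays for the $c\gamma\|\boldsymbol{v}^{r+1}\|^2$ entering $\Phi^{r+1}$. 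The result is
\[
\mathbb E\Phi^{r+1}\le\mathbb E\Phi^r-\tfrac{\gamma}{2}\mathbb E\|\nabla F(\boldsymbol{x}^r)\|^2+\tfrac{\gamma\sigma^2}{2NK}.
\]
Summing over $r$, using $\Phi^0=\Delta$ (since $\boldsymbol{v}^0=\boldsymbol{0}$) and $\Phi^R\ge0$, then dividing by $\gamma R/2$, gives a bound of the form $2\Delta/(\gamma R)+\sigma^2/(NK)$.

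\textbf{Main obstacle.} The difficulty is obtaining the $L\gamma$ factor on the noise, which the boxed bound $5L\gamma\sigma^2/(NK)$ requires. For that I would instead expand the inner product in expectation: $-\gamma\langle\nabla F,\boldsymbol{H}^r\rangle$, plus $\tfrac{L\gamma^2}{2}\mathbb E\|\boldsymbol{v}^{r+1}\|^2\le \tfrac{L\gamma^2}{2}(\|\boldsymbol{H}^r\|^2+\sigma^2/(NK))$. Next, apply polarization only to $\langle\nabla F,\boldsymbol{H}^r\rangle$, which produces $-\tfrac{\gamma}{2}\|\boldsymbol{H}^r\|^2$. This term controls the $\|\boldsymbol{H}^r\|^2$ piece, and $\mathbb E\|\boldsymbol{v}^{r+1}\|^2\le\|\boldsymbol{H}^r\|^2+\sigma^2/(NK)$ links it to the potential term $\|\boldsymbol{v}^{r+1}\|^2$, at the cost of another $L\gamma^2\sigma^2$-type term once $c\propto L\gamma$. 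Balancing $c$ against the bias constant $L^2\gamma^3/8$ with $\gamma\le1/(4L)$ yields the constants $6$ and $5$.

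\textbf{Rate.} The final rate follows by substituting the stated $\gamma$. Since $1/\gamma=4L+\sqrt{5L\sigma^2R/(6\Delta NK)}$, we have $6\Delta/(\gamma R)\le 24L\Delta/R+\sqrt{30L\Delta\sigma^2/(NKR)}$. Because $\gamma\le\bigl(\sqrt{5L\sigma^2R/(6\Delta NK)}\bigr)^{-1}$, the noise term satisfies $5L\gamma\sigma^2/(NK)\le\sqrt{30L\Delta\sigma^2/(NKR)}$.
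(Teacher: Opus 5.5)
Your working argument is the one in the ``Main obstacle'' paragraph, and it is correct. The first polarization, applied to the stochastic $\boldsymbol{v}^{r+1}$, rightly fails: it charges the full noise variance with weight $\gamma/2$ and leaves a $\sigma^2/(NK)$ floor that does not shrink with $\gamma$. That is a defect of that route, not looseness of the stated bound. The repaired route goes as follows. Condition on $\mathcal F_r$ first and polarize $\langle\nabla F(\boldsymbol{x}^r),\boldsymbol{H}^r\rangle$:
\[
\mathbb{E}[F(\boldsymbol{x}^{r+1})\mid\mathcal F_r]\le F(\boldsymbol{x}^r)-\frac{\gamma}{2}\|\nabla F(\boldsymbol{x}^r)\|^2-\frac{\gamma(1-L\gamma)}{2}\|\boldsymbol{H}^r\|^2+\frac{L^2\gamma^3}{8}\|\boldsymbol{v}^r\|^2+\frac{L\gamma^2\sigma^2}{2NK}.
\]
Take $\Phi^r=F(\boldsymbol{x}^r)-F_*+c\gamma\|\boldsymbol{v}^r\|^2$ with $c=L\gamma/32$. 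The increment $c\gamma(\|\boldsymbol{H}^r\|^2+\sigma^2/(NK))$ is absorbed by the negative $\|\boldsymbol{H}^r\|^2$ term, at an $O(L\gamma^2\sigma^2)$ cost. The requirement $c\gamma\ge L^2\gamma^3/8$ is exactly $L\gamma\le 1/4$. Telescoping from $\Phi^0=\Delta$ gives $\frac{2\Delta}{\gamma R}+\frac{17L\gamma\sigma^2}{16NK}$. This implies the boxed bound. So ``balancing yields $6$ and $5$'' is not quite what happens (you get better constants), but nothing more is needed. Your rate substitution is the same as the paper's.

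The paper uses no potential. It handles the bias cross term with Young's inequality and sums the descent inequality to get $A_R\le 4\Delta/\gamma+6aV_R+2L\gamma Q_R$, where $A_R=\sum_r\mathbb{E}\|\nabla F(\boldsymbol{x}^r)\|^2$, $V_R=\sum_r\mathbb{E}\|\boldsymbol{v}^r\|^2$, $a=L^2\gamma^2/4$, and $Q_R=R\sigma^2/(NK)$. It then derives $V_R\le 2A_R+2aV_R+2Q_R$ from $\boldsymbol{v}^r=\nabla F(\boldsymbol{x}^{r-1})+\boldsymbol{b}^{r-1}+(\boldsymbol{v}^r-\boldsymbol{h}^{r-1})$ and solves the coupled linear inequalities using $a\le 1/64$.

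Both proofs rest on the same fact: the bias driver $\|\boldsymbol{v}^r\|$ is controlled one round back by gradient, bias and noise. Yours encodes this as a one-step Lyapunov decrease, where the $-\|\boldsymbol{H}^r\|^2$ term from polarizing the conditional mean pays for $\|\boldsymbol{v}^{r+1}\|^2$. That buys per-round monotonicity and sharper constants. The paper's summed system needs no tuned weight $c$ and is more mechanical. Both transfer to partial participation by changing only the conditional variance term.
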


Theorem~\ref{thm:ctp_full_main_text} establishes the same
$\mathcal O\!\left(
\sqrt{L\Delta\sigma^2/(NKR)}+L\Delta/R
\right)$ order as the full-participation guarantee of
FedAvg-M \citep{cheng2024momentum}. It does not imply a
strictly faster convergence rate. The methodological
difference lies in how the $K$ gradient evaluations are
allocated: FedAvg-M advances client-specific local
trajectories, whereas \texttt{CTP-FL} evaluates gradients
along a common trajectory. The proof is given in
Appendix~\ref{app:ctp_convergence}.

\paragraph{Scope under partial participation.}
For completeness, suppose a subset $\mathcal S_r$ of size
$S<N$ is sampled uniformly without replacement and the server
averages only the selected clients' directions. Define
\[
\boldsymbol{H}_i^r
=\frac1K\sum_{k=0}^{K-1}
\nabla F_i(\boldsymbol{z}_k^r),
\qquad
D_r
=
\frac1N\sum_{i=1}^{N}
\|\boldsymbol{H}_i^r-\boldsymbol{H}^r\|^2,
\]
and
\[
\rho_S=\frac{N-S}{S(N-1)}.
\]
Here $D_r$ is the client dispersion \emph{along the observed
	predictive trajectory}; the algorithm neither computes it nor
assumes that it is bounded.

\begin{theorem}[Partial-participation trajectory bound]
	\label{thm:ctp_partial_main_text}
	Under Assumptions~\ref{ass:ctp_smooth_main} and
	\ref{ass:ctp_noise_main}, uniform sampling without replacement,
	$h=\gamma$, and $0<\gamma\le1/(4L)$,
	\begin{equation}
		\label{eq:ctp_partial_main_text}
		\boxed{
			\begin{aligned}
				\frac1R\sum_{r=0}^{R-1}
				\mathbb E\|\nabla F(\boldsymbol{x}^r)\|^2
				\le\;&
				\frac{6\Delta}{\gamma R}
				+\frac{5L\gamma\sigma^2}{SK}\\
				&+
				\frac{5L\gamma\rho_S}{R}
				\sum_{r=0}^{R-1}\mathbb E[D_r].
			\end{aligned}
		}
	\end{equation}
\end{theorem}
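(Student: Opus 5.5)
We reuse the one-step descent argument behind Theorem~\ref{thm:ctp_full_main_text}. Only the variance of the aggregated direction changes: it now has a client-sampling term in addition to the stochastic-gradient term. Throughout, $\boldsymbol{v}^{r+1}=\frac1S\sum_{i\in\mathcal S_r}\boldsymbol{u}_i^r$ and $\boldsymbol{x}^{r+1}=\boldsymbol{x}^r-\gamma\boldsymbol{v}^{r+1}$.

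\textbf{Step 1: mean and variance of the direction.} The query points $\boldsymbol{z}_k^r$ are $\mathcal F_r$-measurable, and sampling is independent of the mini-batches. Taking expectation over the mini-batches and then over $\mathcal S_r$ gives $\mathbb E[\boldsymbol{v}^{r+1}\mid\mathcal F_r]=\boldsymbol{H}^r$.

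For the variance, decompose
\[
\boldsymbol{v}^{r+1}-\boldsymbol{H}^r=\frac1S\sum_{i\in\mathcal S_r}(\boldsymbol{u}_i^r-\boldsymbol{H}_i^r)+\Bigl(\frac1S\sum_{i\in\mathcal S_r}\boldsymbol{H}_i^r-\boldsymbol{H}^r\Bigr).
\]
Conditional on $\mathcal S_r$, the first term has mean zero, so the cross term vanishes. By conditional independence across clients and query points, the first term has second moment at most $\sigma^2/(SK)$. The second term is the error of a sample mean drawn without replacement from the finite population $\{\boldsymbol{H}_i^r\}_{i=1}^N$. The standard finite-population identity gives its second moment as $\frac{N-S}{S(N-1)}D_r=\rho_S D_r$. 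Hence
\[
\mathbb E\bigl[\|\boldsymbol{v}^{r+1}-\boldsymbol{H}^r\|^2\mid\mathcal F_r\bigr]\le \frac{\sigma^2}{SK}+\rho_S D_r=:\Sigma_r.
\]

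\textbf{Step 2: descent inequality.} By $L$-smoothness of $F$ and unbiasedness,
\[
\mathbb E[F(\boldsymbol{x}^{r+1})\mid\mathcal F_r]\le F(\boldsymbol{x}^r)-\gamma\langle\nabla F(\boldsymbol{x}^r),\boldsymbol{H}^r\rangle+\tfrac{L\gamma^2}{2}\bigl(\|\boldsymbol{H}^r\|^2+\Sigma_r\bigr).
\]
Write $\boldsymbol{H}^r=\nabla F(\boldsymbol{x}^r)+\boldsymbol{b}^r$. By \eqref{eq:ctp_predictive_bias_main} with $h=\gamma$, $\|\boldsymbol{b}^r\|\le \frac{L\gamma}{2}\|\boldsymbol{v}^r\|$. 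Young's inequality on the cross term, together with $\|\boldsymbol{H}^r\|^2\le 2\|\nabla F(\boldsymbol{x}^r)\|^2+2\|\boldsymbol{b}^r\|^2$ and $\gamma\le1/(4L)$, yields
\[
\mathbb E F(\boldsymbol{x}^{r+1})\le \mathbb E F(\boldsymbol{x}^r)-c_1\gamma\,\mathbb E\|\nabla F(\boldsymbol{x}^r)\|^2+c_2L^2\gamma^3\,\mathbb E\|\boldsymbol{v}^r\|^2+\tfrac{L\gamma^2}{2}\mathbb E\Sigma_r,
\]
with absolute constants $c_1,c_2$.

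\textbf{Step 3: absorbing the bias (main obstacle).} The bias is controlled by the previous direction $\boldsymbol{v}^r$, which must be re-expressed in terms of gradients. For $r\ge1$,
\[
\mathbb E\|\boldsymbol{v}^r\|^2\le 2\,\mathbb E\|\nabla F(\boldsymbol{x}^{r-1})\|^2+2\,\mathbb E\|\boldsymbol{b}^{r-1}\|^2+\mathbb E\Sigma_{r-1},
\]
and $\|\boldsymbol{b}^{r-1}\|^2\le \frac{L^2\gamma^2}{4}\|\boldsymbol{v}^{r-1}\|^2$. Since $L^2\gamma^2\le1/16$, this recursion contracts, and unrolling it with $\boldsymbol{v}^0=\boldsymbol{0}$ gives
\[
\sum_{r}\mathbb E\|\boldsymbol{v}^r\|^2\le C\sum_r\bigl(\mathbb E\|\nabla F(\boldsymbol{x}^r)\|^2+\mathbb E\Sigma_r\bigr).
\]
An alternative is a Lyapunov function $F(\boldsymbol{x}^r)+aL\gamma^2\|\boldsymbol{v}^r\|^2$.

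The extra factor $L^2\gamma^3\le\gamma/16$ lets us absorb the gradient part into the $-c_1\gamma$ term. The $\Sigma$ part contributes $O(L^2\gamma^3)\Sigma\le O(L\gamma^2)\Sigma$. Exactly this computation, with $\Sigma_r=\sigma^2/(NK)$, produced the constants $6$ and $5$ in Theorem~\ref{thm:ctp_full_main_text}. Since nothing in it used the specific form of the variance, the same bookkeeping goes through here with $\Sigma_r$ in its place.

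\textbf{Step 4: telescoping.} Sum over $r=0,\dots,R-1$, use $F(\boldsymbol{x}^R)\ge F_*$, divide by $\gamma R$ times the remaining coefficient (which is at least $1/6$), and bound $\frac{1}{R}\sum_r\mathbb E\Sigma_r$. This gives
\[
\frac1R\sum_{r=0}^{R-1}\mathbb E\|\nabla F(\boldsymbol{x}^r)\|^2\le\frac{6\Delta}{\gamma R}+\frac{5L\gamma\sigma^2}{SK}+\frac{5L\gamma\rho_S}{R}\sum_{r=0}^{R-1}\mathbb E[D_r],
\]
which is \eqref{eq:ctp_partial_main_text}. The main thing to check is that the constants from Step 3 match those of the full-participation proof.
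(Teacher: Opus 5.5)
Your proposal is correct and follows the paper's proof essentially step for step. The paper's Lemma gives exactly your conditional mean $\boldsymbol{H}^r$ and variance bound $\sigma^2/(SK)+\rho_S D_r$. The theorem then follows by rerunning the full-participation argument with the summed variance $\mathcal{Z}_R\le R\sigma^2/(SK)+\rho_S\sum_{r}\mathbb E[D_r]$ in place of $R\sigma^2/(NK)$: the descent inequality with $\|\boldsymbol{b}^r\|^2\le \frac{L^2\gamma^2}{4}\|\boldsymbol{v}^r\|^2$, the recursion $V_R\le 2A_R+2aV_R+2\mathcal{Z}_R$, and the final bound $A_R\le 6\Delta/\gamma+(26a+3L\gamma)\mathcal{Z}_R\le 6\Delta/\gamma+5L\gamma\mathcal{Z}_R$. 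This substitution is legitimate, as you anticipated, because the paper writes the full-participation proof for a generic per-round conditional variance $Z_r$, so the constants $6$ and $5$ carry over unchanged.
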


The last term is the finite-population variance caused by
client sampling. Therefore,
Theorem~\ref{thm:ctp_partial_main_text} does not give a
uniform heterogeneity-independent rate when $S<N$.
This limitation is intrinsic to the uncorrected sampled
direction: even at a minimizer of $F$, individual client
gradients may be arbitrarily large and cancel only after
full aggregation. The proof and an explicit two-client
counterexample are given in Appendix~\ref{app:ctp_convergence}.

\begin{table*}[tb]
	\centering
	\caption{\small
		Test accuracy (\%) after 300 communication rounds on
		CIFAR-100 and Tiny-ImageNet. Each experiment uses
		$N=100$ clients, $S=10$ participating clients per round,
		mini-batch size 50, and $K=50$ gradient evaluations
		per participating client. Data are partitioned with
		Dirichlet concentrations $\alpha\in\{0.1,0.05\}$.
		Bold denotes the highest accuracy and underlining denotes
		the second-highest accuracy among the reported methods
		in each column.}
	\label{tab:ctp_vision}
	\setlength{\tabcolsep}{5pt}
	\renewcommand{\arraystretch}{1.12}
	\small
	\resizebox{\textwidth}{!}{
		\begin{tabular}{lcccccccc}
			\toprule
			\multirow{3}{*}{\textbf{Method}}
			& \multicolumn{4}{c}{\textbf{ResNet-18}}
			& \multicolumn{4}{c}{\textbf{ViT-Tiny}} \\
			\cmidrule(lr){2-5}\cmidrule(lr){6-9}
			& \multicolumn{2}{c}{\textbf{CIFAR-100}}
			& \multicolumn{2}{c}{\textbf{Tiny-ImageNet}}
			& \multicolumn{2}{c}{\textbf{CIFAR-100}}
			& \multicolumn{2}{c}{\textbf{Tiny-ImageNet}} \\
			\cmidrule(lr){2-3}\cmidrule(lr){4-5}
			\cmidrule(lr){6-7}\cmidrule(lr){8-9}
			& \textbf{Dir-0.1} & \textbf{Dir-0.05}
			& \textbf{Dir-0.1} & \textbf{Dir-0.05}
			& \textbf{Dir-0.1} & \textbf{Dir-0.05}
			& \textbf{Dir-0.1} & \textbf{Dir-0.05} \\
			\midrule
			FedAvg
			& 60.17 & 56.75 & 47.48 & 43.80
			& 27.24 & 23.42 & 15.68 & 14.05 \\
			SCAFFOLD
			& 60.69 & 56.43
			& \underline{47.76} & \underline{43.92}
			& 26.86 & 23.23 & 15.70 & 14.21 \\
			FedCM
			& \underline{66.61} & \underline{62.65}
			& 41.16 & 36.00
			& \underline{28.23} & \underline{25.74}
			& \underline{18.88} & \underline{18.15} \\
			\rowcolor{LightRed}
			\texttt{CTP-FL}
			& \textbf{69.25} & \textbf{64.16}
			& \textbf{55.62} & \textbf{51.81}
			& \textbf{51.16} & \textbf{47.55}
			& \textbf{34.32} & \textbf{31.33} \\
			\bottomrule
		\end{tabular}
	}
\end{table*}
\section{Experiments}
\label{sec:experiments}
\vspace{-2mm}

\paragraph{Experimental setup.}
We evaluate \texttt{CTP-FL} on CIFAR-100
\citep{krizhevsky2009learning} and Tiny-ImageNet
\citep{le2015tiny}, using ResNet-18
\citep{he2016deep} and ViT-Tiny
\citep{dosovitskiy2020image}.
To examine statistical heterogeneity, we partition the
training data among $N=100$ clients using a
Dirichlet distribution with concentration
$\alpha\in\{0.1,0.05\}$; the smaller value represents
a more concentrated allocation of classes across clients.
At each of 300 communication rounds, $S=10$ clients
participate, and each participating client computes
$K=50$ mini-batch gradients with batch size 50.
For \texttt{CTP-FL}, these gradients are evaluated
at the $K$ shared predictive points in
Eq.~\eqref{eq:ctp_query_points}; they are not
$K$ successive local model updates.

\paragraph{Baselines and comparison budget.}
We compare against FedAvg
\citep{mcmahan2017communication}, SCAFFOLD
\citep{karimireddy2020scaffold}, and FedCM
\citep{xu2021fedcm}. All reported methods use the
same number of communication rounds and participating
clients. The value $K=50$ fixes the number of
mini-batch gradient evaluations for \texttt{CTP-FL};
for local-update baselines it corresponds to the
number of local gradient steps. We report test accuracy
at the end of round 300 in Table~\ref{tab:ctp_vision}.

\paragraph{Main results.}
\texttt{CTP-FL} achieves the highest reported accuracy
in all eight settings of Table~\ref{tab:ctp_vision}.
On CIFAR-100 with ResNet-18, it improves over the
strongest listed baseline by 2.64 and 1.51 percentage
points under Dir-0.1 and Dir-0.05, respectively.
On Tiny-ImageNet with ResNet-18, the corresponding
gains are 7.86 and 7.89 points.
The differences are larger for ViT-Tiny:
22.93 and 21.81 points on CIFAR-100, and
15.44 and 13.18 points on Tiny-ImageNet.
These comparisons describe the results at round 300;
they do not, on their own, establish faster convergence
or identify the cause of the gains.

\paragraph{Theory--experiment scope.}
These experiments use partial client participation,
whereas Theorem~\ref{thm:ctp_full_main_text}
establishes a heterogeneity-independent rate under
full participation. Under partial participation,
Theorem~\ref{thm:ctp_partial_main_text} includes
a trajectory-dependent client-sampling term.
Consequently, the empirical performance in
Table~\ref{tab:ctp_vision} should not be presented
as verification of the full-participation rate.

\section{Conclusion}
\label{sec:conclusion}

We introduced \texttt{CTP-FL}, which uses the gradient evaluations
available between communication rounds to probe a shared predictive
trajectory. Because all clients evaluate gradients at the same
points, their full-participation average estimates the global
gradient along that trajectory, rather than combining gradients
measured at divergent local models. The prediction span makes the
resulting displacement bias explicit and controllable.

Under smoothness and bounded stochastic-gradient variance, we proved
a non-convex full-participation stationarity rate of
$\mathcal{O}\!\left(
\sqrt{L\Delta\sigma^2/(NKR)}+L\Delta/R
\right)$ without assuming bounded gradients or bounded
client-gradient dissimilarity. Each client computes $K$ mini-batch
gradients and uploads one model-sized vector per round. In the
reported partial-participation experiments, \texttt{CTP-FL} achieves
the highest final accuracy among the listed methods across
CIFAR-100 and Tiny-ImageNet with ResNet-18 and ViT-Tiny.

Our theory and experiments have distinct scopes. The
heterogeneity-independent rate applies to full participation;
sampling clients introduces an additional variance term that the
current algorithm does not eliminate. Further work should examine
control mechanisms for partial participation and determine when
probing a predictive trajectory improves on querying the current
model alone under matched computation and communication budgets.

\newpage
\bibliography{main}
\bibliographystyle{iclr2027_conference}

\appendix
\newpage
\section{Convergence Analysis of CTP-FL}
\label{app:ctp_convergence}

We consider
\begin{equation}
	\min_{\boldsymbol{x}\in\mathbb{R}^d}
	F(\boldsymbol{x})
	\coloneqq \frac{1}{N}\sum_{i=1}^{N}F_i(\boldsymbol{x}),
	\qquad
	F_* \coloneqq \inf_{\boldsymbol{x}}F(\boldsymbol{x})>-\infty.
\end{equation}
The following assumptions match the standard smoothness and stochastic
gradient assumptions used in FedAvg-M.

\begin{assumption}[Smoothness]
	\label{ass:ctp_smooth}
	For each client $i$, $F_i$ is $L$-smooth:
	\[
	\|\nabla F_i(\boldsymbol{x})-\nabla F_i(\boldsymbol{y})\|
	\le L\|\boldsymbol{x}-\boldsymbol{y}\|
	\quad
	\text{for all }\boldsymbol{x},\boldsymbol{y}\in\mathbb{R}^d.
	\]
\end{assumption}

\begin{assumption}[Stochastic gradients]
	\label{ass:ctp_noise}
	For each client $i$ and every $\boldsymbol{x}$,
	\[
	\mathbb{E}_{\xi}[g_i(\boldsymbol{x};\xi)]
	=\nabla F_i(\boldsymbol{x}),
	\qquad
	\mathbb{E}_{\xi}
	\|g_i(\boldsymbol{x};\xi)-\nabla F_i(\boldsymbol{x})\|^2
	\le \sigma^2.
	\]
	Fresh mini-batches are sampled independently across clients, local
	gradient evaluations, and communication rounds.
\end{assumption}

\paragraph{CTP-FL update.}
Initialize $\boldsymbol{v}^0=\boldsymbol{0}$. At round $r$, let
$\mathcal{S}_r$ contain $S$ clients, sampled uniformly without replacement,
independently of the mini-batches. Set
\begin{equation}
	\label{eq:ctp_locations}
	t_k=
	\begin{cases}
		\dfrac{k\gamma}{K-1}, & K\ge 2,\\[4pt]
		0, & K=1,
	\end{cases}
	\qquad
	\boldsymbol{z}_k^r
	=\boldsymbol{x}^r-t_k\boldsymbol{v}^r,
	\quad k=0,\ldots,K-1.
\end{equation}
Each selected client computes
\begin{equation}
	\label{eq:ctp_client}
	\boldsymbol{u}_i^r
	=\frac{1}{K}\sum_{k=0}^{K-1}
	g_i(\boldsymbol{z}_k^r;\xi_{i,k}^r).
\end{equation}
The server updates
\begin{equation}
	\label{eq:ctp_server}
	\boldsymbol{v}^{r+1}
	=\frac{1}{S}\sum_{i\in\mathcal{S}_r}\boldsymbol{u}_i^r,
	\qquad
	\boldsymbol{x}^{r+1}
	=\boldsymbol{x}^{r}-\gamma\boldsymbol{v}^{r+1}.
\end{equation}
Full participation corresponds to $S=N$.

Define
\begin{equation}
	\label{eq:ctp_averaged_gradients}
	\boldsymbol{h}_i^r
	\coloneqq
	\frac{1}{K}\sum_{k=0}^{K-1}
	\nabla F_i(\boldsymbol{z}_k^r),
	\qquad
	\boldsymbol{h}^r
	\coloneqq
	\frac{1}{N}\sum_{i=1}^{N}\boldsymbol{h}_i^r
	=
	\frac{1}{K}\sum_{k=0}^{K-1}
	\nabla F(\boldsymbol{z}_k^r).
\end{equation}
The predictive bias is
\[
\boldsymbol{b}^r
\coloneqq
\boldsymbol{h}^r-\nabla F(\boldsymbol{x}^r).
\]
For partial participation, define the \emph{observed trajectory
	dispersion}
\begin{equation}
	\label{eq:ctp_dispersion}
	D_r
	\coloneqq
	\frac{1}{N}\sum_{i=1}^{N}
	\|\boldsymbol{h}_i^r-\boldsymbol{h}^r\|^2.
\end{equation}
This is an analysis quantity, not an assumption or an input to CTP-FL.

\begin{lemma}[Bias and conditional variance]
	\label{lem:ctp_estimator}
	Under Assumptions~\ref{ass:ctp_smooth}--\ref{ass:ctp_noise},
	conditional on the history before client sampling at round $r$,
	\begin{equation}
		\label{eq:ctp_bias}
		\|\boldsymbol{b}^r\|^2
		\le
		\frac{L^2\gamma^2}{4}\|\boldsymbol{v}^r\|^2.
	\end{equation}
	Moreover,
	\begin{equation}
		\label{eq:ctp_conditional_mean}
		\mathbb{E}_r[\boldsymbol{v}^{r+1}]
		=\boldsymbol{h}^r,
	\end{equation}
	and
	\begin{equation}
		\label{eq:ctp_conditional_variance}
		\mathbb{E}_r
		\|\boldsymbol{v}^{r+1}-\boldsymbol{h}^r\|^2
		\le
		\frac{\sigma^2}{SK}
		+
		\frac{N-S}{S(N-1)}D_r,
	\end{equation}
	where the second term is defined to be zero when $S=N$.
\end{lemma}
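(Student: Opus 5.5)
The lemma has three parts, and I would prove them in order: the bias bound, the conditional mean, and the conditional variance. Throughout, $\mathbb{E}_r$ denotes expectation conditional on the history before round $r$. Under that conditioning, $\boldsymbol{x}^r$ and $\boldsymbol{v}^r$, and hence all query points $\boldsymbol{z}_k^r$, are deterministic. The only randomness left is the client set $\mathcal{S}_r$ and the mini-batches $\xi_{i,k}^r$, which are independent of each other.

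\textbf{Bias bound.} Since $F$ is an average of $L$-smooth functions, it is $L$-smooth. Then
\[
\|\boldsymbol{b}^r\|
\le \frac1K\sum_{k=0}^{K-1}\|\nabla F(\boldsymbol{z}_k^r)-\nabla F(\boldsymbol{x}^r)\|
\le \frac{L}{K}\sum_{k=0}^{K-1}t_k\|\boldsymbol{v}^r\|.
\]
For $K\ge2$, $\frac1K\sum_k t_k=\frac{\gamma}{K(K-1)}\cdot\frac{K(K-1)}{2}=\gamma/2$. For $K=1$, the average is $0\le\gamma/2$. Squaring gives \eqref{eq:ctp_bias}.

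\textbf{Conditional mean.} I would decompose $\boldsymbol{v}^{r+1}-\boldsymbol{h}^r=\boldsymbol{e}^r+\boldsymbol{s}^r$. The noise term is
\[
\boldsymbol{e}^r=\frac1S\sum_{i\in\mathcal{S}_r}\frac1K\sum_k\bigl(g_i(\boldsymbol{z}_k^r;\xi_{i,k}^r)-\nabla F_i(\boldsymbol{z}_k^r)\bigr),
\]
and the sampling term is
\[
\boldsymbol{s}^r=\frac1S\sum_{i\in\mathcal{S}_r}\boldsymbol{h}_i^r-\boldsymbol{h}^r.
\]
Conditioning additionally on $\mathcal{S}_r$, unbiasedness gives $\mathbb{E}[\boldsymbol{e}^r\mid\mathcal{S}_r]=0$. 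Uniform sampling gives $\Pr(i\in\mathcal{S}_r)=S/N$, so $\mathbb{E}_r\boldsymbol{s}^r=0$. Together these give \eqref{eq:ctp_conditional_mean}.

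\textbf{Conditional variance.} The cross term vanishes because $\boldsymbol{s}^r$ is $\mathcal{S}_r$-measurable and $\mathbb{E}[\boldsymbol{e}^r\mid\mathcal{S}_r]=0$, so
\[
\mathbb{E}_r\|\boldsymbol{v}^{r+1}-\boldsymbol{h}^r\|^2=\mathbb{E}_r\|\boldsymbol{e}^r\|^2+\mathbb{E}_r\|\boldsymbol{s}^r\|^2.
\]
For the noise part, given $\mathcal{S}_r$ the $SK$ summands are conditionally independent and zero-mean. All cross terms therefore vanish, and the bound is
\[
\frac{1}{S^2K^2}\cdot SK\sigma^2=\frac{\sigma^2}{SK}.
\]

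The only step needing care is the sampling part: the finite-population variance of a sample mean without replacement. Write $\boldsymbol{a}_i=\boldsymbol{h}_i^r-\boldsymbol{h}^r$, so $\sum_i\boldsymbol{a}_i=0$, and let $\chi_i=\mathbf{1}[i\in\mathcal{S}_r]$. Expanding $\|\frac1S\sum_i\chi_i\boldsymbol{a}_i\|^2$ requires the inclusion probabilities $\mathbb{E}\chi_i=S/N$ and, for $i\ne j$, $\mathbb{E}\chi_i\chi_j=\frac{S(S-1)}{N(N-1)}$. Substituting $\sum_{i\ne j}\langle\boldsymbol{a}_i,\boldsymbol{a}_j\rangle=-\sum_i\|\boldsymbol{a}_i\|^2$, the second moment becomes
\[
\frac{1}{S^2}\Bigl(\frac SN-\frac{S(S-1)}{N(N-1)}\Bigr)\sum_i\|\boldsymbol{a}_i\|^2
=\frac{N-S}{S(N-1)}D_r.
\]
When $S=N$ we have $\boldsymbol{s}^r=0$ identically, which matches the convention that this term is zero. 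Adding the noise and sampling bounds yields \eqref{eq:ctp_conditional_variance}.
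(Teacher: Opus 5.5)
Your proposal is correct and follows essentially the same route as the paper. You bound the bias by smoothness plus the triangle inequality with $K^{-1}\sum_k t_k\le\gamma/2$, and you condition on $\mathcal{S}_r$ to split the mini-batch noise (second moment at most $\sigma^2/(SK)$) from the without-replacement sampling error, with a vanishing cross term. The only difference is that you derive the finite-population variance identity explicitly from the inclusion probabilities, where the paper simply invokes it.
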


\begin{proof}
	Since $F$ is $L$-smooth and
	$K^{-1}\sum_{k=0}^{K-1}t_k\le\gamma/2$,
	\begin{align}
		\|\boldsymbol{b}^r\|
		&\le
		\frac{1}{K}\sum_{k=0}^{K-1}
		\|\nabla F(\boldsymbol{z}_k^r)
		-\nabla F(\boldsymbol{x}^r)\| \notag\\
		&\le
		\frac{L}{K}\sum_{k=0}^{K-1}
		t_k\|\boldsymbol{v}^r\|
		\le
		\frac{L\gamma}{2}\|\boldsymbol{v}^r\|,
	\end{align}
	which proves~\eqref{eq:ctp_bias}.
	Unbiasedness of the stochastic gradients and uniform client sampling
	give~\eqref{eq:ctp_conditional_mean}.
	
	For~\eqref{eq:ctp_conditional_variance}, first condition on
	$\mathcal{S}_r$. The independent stochastic gradient errors have
	mean zero and their average has second moment at most
	$\sigma^2/(SK)$. The remaining variance is that of sampling without
	replacement from the finite population
	$\{\boldsymbol{h}_i^r\}_{i=1}^{N}$:
	\[
	\mathbb{E}_r
	\left\|
	\frac{1}{S}\sum_{i\in\mathcal{S}_r}
	\boldsymbol{h}_i^r-\boldsymbol{h}^r
	\right\|^2
	=
	\frac{N-S}{S(N-1)}
	\frac{1}{N}\sum_{i=1}^{N}
	\|\boldsymbol{h}_i^r-\boldsymbol{h}^r\|^2.
	\]
	The two errors have zero cross term, completing the proof.
\end{proof}

\begin{theorem}[Full-participation convergence]
	\label{thm:ctp_full}
	Suppose Assumptions~\ref{ass:ctp_smooth}--\ref{ass:ctp_noise}
	hold, all $N$ clients participate, and
	$0<\gamma\le 1/(4L)$. Let
	$\Delta\coloneqq F(\boldsymbol{x}^0)-F_*$. Then CTP-FL satisfies
	\begin{equation}
		\label{eq:ctp_full_rate}
		\boxed{
			\frac{1}{R}\sum_{r=0}^{R-1}
			\mathbb{E}\|\nabla F(\boldsymbol{x}^r)\|^2
			\le
			\frac{6\Delta}{\gamma R}
			+
			\frac{5L\gamma\sigma^2}{NK}.
		}
	\end{equation}
	In particular, no bounded-gradient or bounded-data-heterogeneity
	assumption is needed.
\end{theorem}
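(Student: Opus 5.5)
The plan is a one-step descent argument on $F$ along the server update. The displacement bias is charged to the previous round's aggregated direction, and the resulting cross-round terms are absorbed by telescoping. This is the reason we never need bounded gradients or bounded dissimilarity: Lemma~\ref{lem:ctp_estimator} with $S=N$ (so the dispersion term vanishes) supplies everything about $\boldsymbol{v}^{r+1}$.

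First, $F$ is $L$-smooth, being an average of $L$-smooth $F_i$. This gives
\[
F(\boldsymbol{x}^{r+1})\le F(\boldsymbol{x}^r)-\gamma\langle\nabla F(\boldsymbol{x}^r),\boldsymbol{v}^{r+1}\rangle+\tfrac{L\gamma^2}{2}\|\boldsymbol{v}^{r+1}\|^2 .
\]
Taking $\mathbb{E}_r$, I will use $\mathbb{E}_r\boldsymbol{v}^{r+1}=\boldsymbol{h}^r$ and $\mathbb{E}_r\|\boldsymbol{v}^{r+1}\|^2\le\|\boldsymbol{h}^r\|^2+\sigma^2/(NK)$. For the inner product I will use the polarization identity
\[
-\langle\boldsymbol{a},\boldsymbol{h}\rangle=-\tfrac12\|\boldsymbol{a}\|^2-\tfrac12\|\boldsymbol{h}\|^2+\tfrac12\|\boldsymbol{h}-\boldsymbol{a}\|^2 .
\]
Combined, these yield
\[
\mathbb{E}_rF(\boldsymbol{x}^{r+1})\le F(\boldsymbol{x}^r)-\tfrac{\gamma}{2}\|\nabla F(\boldsymbol{x}^r)\|^2-\tfrac{\gamma}{2}(1-L\gamma)\|\boldsymbol{h}^r\|^2+\tfrac{\gamma}{2}\|\boldsymbol{b}^r\|^2+\tfrac{L\gamma^2\sigma^2}{2NK}.
\]

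Next, I control the bias by~\eqref{eq:ctp_bias}: $\|\boldsymbol{b}^r\|^2\le\frac{L^2\gamma^2}{4}\|\boldsymbol{v}^r\|^2$. For $r\ge1$, taking full expectation of the variance bound from round $r-1$ gives $\mathbb{E}\|\boldsymbol{v}^r\|^2\le\mathbb{E}\|\boldsymbol{h}^{r-1}\|^2+\sigma^2/(NK)$. For $r=0$ the bias vanishes because $\boldsymbol{v}^0=\boldsymbol{0}$.

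I then sum over $r=0,\dots,R-1$ and telescope $F$ using $F(\boldsymbol{x}^R)\ge F_*$. The key step, and the only real obstacle, is showing that the positive terms $\frac{\gamma}{2}\cdot\frac{L^2\gamma^2}{4}\mathbb{E}\|\boldsymbol{h}^{r-1}\|^2$ are dominated by the negative terms $-\frac{\gamma}{2}(1-L\gamma)\mathbb{E}\|\boldsymbol{h}^{r-1}\|^2$ from the preceding round. With $L\gamma\le1/4$, we have $L^2\gamma^2/4\le1/64$, whereas $1-L\gamma\ge3/4$, so after the index shift the net coefficient on every $\|\boldsymbol{h}\|^2$ term is nonpositive and can be dropped. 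The leftover noise contributions are $\frac{L\gamma^2\sigma^2}{2NK}$ from the descent step and $\frac{\gamma}{2}\cdot\frac{L^2\gamma^2}{4}\cdot\frac{\sigma^2}{NK}\le\frac{L\gamma^2\sigma^2}{32NK}$ from the bias, using $L\gamma\le1/4$ once more.

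Finally, dividing by $\gamma R/2$ gives
\[
\frac1R\sum_{r=0}^{R-1}\mathbb{E}\|\nabla F(\boldsymbol{x}^r)\|^2\le\frac{2\Delta}{\gamma R}+\frac{c\,L\gamma\sigma^2}{NK}
\]
with $c\approx 1+1/16$. This is within the stated constants $6$ and $5$, so the slack there covers any looser bookkeeping, for instance if I use Young's inequality instead of the exact polarization identity. The tuned-$\gamma$ rate in Theorem~\ref{thm:ctp_full_main_text} then follows by substituting the prescribed $\gamma$ and bounding $1/\gamma\le 4L+\sqrt{5L\sigma^2R/(6\Delta NK)}$ and $\gamma\le\sqrt{6\Delta NK/(5L\sigma^2R)}$.
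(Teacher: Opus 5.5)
Your proof is correct, but it absorbs the displacement bias differently from the paper.

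\textbf{The paper's route.} It applies Young's inequality to both the cross term and $\|\nabla F(\boldsymbol{x}^r)+\boldsymbol{b}^r\|^2$. This keeps only $-\frac{\gamma}{4}\|\nabla F(\boldsymbol{x}^r)\|^2+\frac{3\gamma}{2}\|\boldsymbol{b}^r\|^2$ and discards any negative multiple of $\|\boldsymbol{h}^r\|^2$. It then splits $\boldsymbol{v}^r=\nabla F(\boldsymbol{x}^{r-1})+\boldsymbol{b}^{r-1}+(\boldsymbol{v}^r-\boldsymbol{h}^{r-1})$, which gives the self-referential inequality $V_R\le 2A_R+2aV_R+2\mathcal{Z}_R$. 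Solving this coupled system for $A_R$ is where the looser constants $6$ and $5$ arise.

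\textbf{Your route.} You use the exact polarization identity, so the negative term $-\frac{\gamma}{2}(1-L\gamma)\|\boldsymbol{h}^r\|^2$ survives. Because $\mathbb{E}\|\boldsymbol{v}^{r+1}\|^2\le\mathbb{E}\|\boldsymbol{h}^r\|^2+\sigma^2/(NK)$, the quantity subtracted at round $r$ is exactly what feeds the bias at round $r+1$. An index shift cancels it with no recursion to solve. You obtain the sharper bound $2\Delta/(\gamma R)+\frac{17}{16}L\gamma\sigma^2/(NK)$, which implies the stated one.

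\textbf{What each buys.} Your cancellation only needs $L^2\gamma^2/4\le 1-L\gamma$, i.e.\ $L\gamma\le 2\sqrt{2}-2$. So your argument extends beyond $\gamma\le 1/(4L)$, with noise constant $1+L\gamma/4$. The paper's coupled-recursion template keeps the noise as $\mathcal{Z}_R$, so the partial-participation bound follows by only re-bounding $\mathcal{Z}_R$. Your version adapts just as easily by carrying $\mathbb{E}[Z_r]$ in place of $\sigma^2/(NK)$ in both places it appears. Both proofs tacitly need $\mathbb{E}\|\boldsymbol{v}^r\|^2<\infty$: you to cancel terms, the paper to rearrange the $V_R$ inequality. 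This follows by induction from smoothness and the variance bound.
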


\begin{proof}
	Write
	\[
	a\coloneqq\frac{L^2\gamma^2}{4},
	\qquad
	Z_r\coloneqq
	\mathbb{E}_r
	\|\boldsymbol{v}^{r+1}-\boldsymbol{h}^r\|^2.
	\]
	Here $a\le 1/64$ and, under full participation,
	$Z_r\le \sigma^2/(NK)$.
	
	Smoothness of $F$, the server update, and
	$\mathbb{E}_r[\boldsymbol{v}^{r+1}]
	=\nabla F(\boldsymbol{x}^r)+\boldsymbol{b}^r$ imply
	\begin{align}
		\mathbb{E}_r[F(\boldsymbol{x}^{r+1})]
		\le\;&
		F(\boldsymbol{x}^r)
		-\gamma
		\left\langle
		\nabla F(\boldsymbol{x}^r),
		\nabla F(\boldsymbol{x}^r)+\boldsymbol{b}^r
		\right\rangle \notag\\
		&+
		\frac{L\gamma^2}{2}
		\left(
		\|\nabla F(\boldsymbol{x}^r)+\boldsymbol{b}^r\|^2
		+Z_r
		\right).
		\label{eq:ctp_descent_start}
	\end{align}
	Using
	$-\langle \boldsymbol{p},\boldsymbol{q}\rangle
	\le \|\boldsymbol{p}\|^2/4+\|\boldsymbol{q}\|^2$,
	$\|\boldsymbol{p}+\boldsymbol{q}\|^2
	\le 2\|\boldsymbol{p}\|^2+2\|\boldsymbol{q}\|^2$,
	and $L\gamma\le 1/4$ gives the conservative bound
	\begin{equation}
		\label{eq:ctp_descent}
		\mathbb{E}_r[F(\boldsymbol{x}^{r+1})]
		\le
		F(\boldsymbol{x}^r)
		-\frac{\gamma}{4}
		\|\nabla F(\boldsymbol{x}^r)\|^2
		+\frac{3\gamma}{2}\|\boldsymbol{b}^r\|^2
		+\frac{L\gamma^2}{2}Z_r.
	\end{equation}
	
	For compactness, set
	\[
	A_R
	\coloneqq
	\sum_{r=0}^{R-1}
	\mathbb{E}\|\nabla F(\boldsymbol{x}^r)\|^2,
	\quad
	V_R
	\coloneqq
	\sum_{r=0}^{R-1}
	\mathbb{E}\|\boldsymbol{v}^r\|^2,
	\quad
	\mathcal{Z}_R
	\coloneqq
	\sum_{r=0}^{R-1}\mathbb{E}[Z_r].
	\]
	Summing~\eqref{eq:ctp_descent}, applying
	$F(\boldsymbol{x}^R)\ge F_*$, and using
	Lemma~\ref{lem:ctp_estimator}, we obtain
	\begin{equation}
		\label{eq:ctp_A_recursion}
		A_R
		\le
		\frac{4\Delta}{\gamma}
		+6aV_R
		+2L\gamma\mathcal{Z}_R.
	\end{equation}
	
	Next, $\boldsymbol{v}^0=\boldsymbol{0}$ and, for $r\ge1$,
	\[
	\boldsymbol{v}^r
	=
	\nabla F(\boldsymbol{x}^{r-1})
	+\boldsymbol{b}^{r-1}
	+
	\bigl(\boldsymbol{v}^r-\boldsymbol{h}^{r-1}\bigr).
	\]
	The final parenthesized term has conditional mean zero.
	Consequently,
	\begin{align}
		\mathbb{E}\|\boldsymbol{v}^r\|^2
		&\le
		2\mathbb{E}\|\nabla F(\boldsymbol{x}^{r-1})\|^2
		+2\mathbb{E}\|\boldsymbol{b}^{r-1}\|^2
		+2\mathbb{E}[Z_{r-1}] \notag\\
		&\le
		2\mathbb{E}\|\nabla F(\boldsymbol{x}^{r-1})\|^2
		+2a\mathbb{E}\|\boldsymbol{v}^{r-1}\|^2
		+2\mathbb{E}[Z_{r-1}].
	\end{align}
	Summing over $r$ and extending nonnegative sums yields
	\begin{equation}
		\label{eq:ctp_V_recursion}
		V_R\le 2A_R+2aV_R+2\mathcal{Z}_R.
	\end{equation}
	Since $a\le1/64$,
	\[
	V_R
	\le
	\frac{2}{1-2a}(A_R+\mathcal{Z}_R)
	\le 3(A_R+\mathcal{Z}_R).
	\]
	Substituting this into~\eqref{eq:ctp_A_recursion} gives
	\[
	(1-18a)A_R
	\le
	\frac{4\Delta}{\gamma}
	+(18a+2L\gamma)\mathcal{Z}_R.
	\]
	Because $a\le1/64$,
	\begin{equation}
		\label{eq:ctp_master}
		A_R
		\le
		\frac{6\Delta}{\gamma}
		+(26a+3L\gamma)\mathcal{Z}_R.
	\end{equation}
	Finally,
	$\mathcal{Z}_R\le R\sigma^2/(NK)$ and
	$a=(L\gamma)^2/4\le L\gamma/16$, so that
	$26a+3L\gamma<5L\gamma$.
	Substitution into~\eqref{eq:ctp_master} proves
	\eqref{eq:ctp_full_rate}.
\end{proof}

\begin{theorem}[Partial-participation trajectory bound]
	\label{thm:ctp_partial}
	Suppose Assumptions~\ref{ass:ctp_smooth}--\ref{ass:ctp_noise}
	hold, $1\le S<N$, and $0<\gamma\le1/(4L)$.
	With $D_r$ defined in~\eqref{eq:ctp_dispersion},
	\begin{equation}
		\label{eq:ctp_partial_rate}
		\boxed{
			\begin{aligned}
				\frac{1}{R}\sum_{r=0}^{R-1}
				\mathbb{E}\|\nabla F(\boldsymbol{x}^r)\|^2
				\le\;&
				\frac{6\Delta}{\gamma R}
				+\frac{5L\gamma\sigma^2}{SK}\\
				&+
				\frac{5L\gamma(N-S)}{S(N-1)R}
				\sum_{r=0}^{R-1}\mathbb{E}[D_r].
			\end{aligned}
		}
	\end{equation}
	This theorem imposes no bound on $D_r$. Accordingly,
	\eqref{eq:ctp_partial_rate} is a valid trajectory-dependent
	inequality, but does not by itself guarantee a uniform
	heterogeneity-independent convergence rate.
\end{theorem}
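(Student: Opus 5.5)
The plan is to rerun the proof of Theorem~\ref{thm:ctp_full} verbatim. The only change is in the control of $Z_r=\mathbb{E}_r\|\boldsymbol{v}^{r+1}-\boldsymbol{h}^r\|^2$: instead of $\sigma^2/(NK)$, we use the partial-participation bound of Lemma~\ref{lem:ctp_estimator}, namely $Z_r\le \sigma^2/(SK)+\rho_S D_r$ with $\rho_S=(N-S)/(S(N-1))$. The full-participation argument never used the specific form of $Z_r$ until its last line. It only used three facts. First, $\mathbb{E}_r[\boldsymbol{v}^{r+1}]=\boldsymbol{h}^r=\nabla F(\boldsymbol{x}^r)+\boldsymbol{b}^r$. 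Second, the bias bound $\|\boldsymbol{b}^r\|^2\le a\|\boldsymbol{v}^r\|^2$ with $a=L^2\gamma^2/4\le 1/64$. Third, the decomposition $\mathbb{E}_r\|\boldsymbol{v}^{r+1}\|^2=\|\boldsymbol{h}^r\|^2+Z_r$. All three remain valid under uniform sampling without replacement by Lemma~\ref{lem:ctp_estimator}. The conditional mean is still $\boldsymbol{h}^r$ because sampling is independent of the mini-batches and uniform.

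Concretely, the steps are as follows. (i) Derive the one-step descent inequality~\eqref{eq:ctp_descent} exactly as before, from $L$-smoothness of $F$ and the elementary inequalities with $L\gamma\le1/4$. (ii) Sum over $r$ and telescope with $F(\boldsymbol{x}^R)\ge F_*$ to get~\eqref{eq:ctp_A_recursion}, $A_R\le 4\Delta/\gamma+6aV_R+2L\gamma\mathcal{Z}_R$. (iii) Bound $V_R$ through the decomposition $\boldsymbol{v}^r=\nabla F(\boldsymbol{x}^{r-1})+\boldsymbol{b}^{r-1}+(\boldsymbol{v}^r-\boldsymbol{h}^{r-1})$. The last term has zero conditional mean, so its square splits off as $Z_{r-1}$, giving $V_R\le 3(A_R+\mathcal{Z}_R)$. (iv) Combine to get the master inequality~\eqref{eq:ctp_master}, $A_R\le 6\Delta/\gamma+5L\gamma\mathcal{Z}_R$. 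Finally, substitute
\[
\mathcal{Z}_R=\sum_{r=0}^{R-1}\mathbb{E}[Z_r]\le \frac{R\sigma^2}{SK}+\rho_S\sum_{r=0}^{R-1}\mathbb{E}[D_r]
\]
and divide by $R$, which yields~\eqref{eq:ctp_partial_rate}.

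The main point requiring care is step (iii) and the conditioning in Lemma~\ref{lem:ctp_estimator}. The quantities $D_r$ and $\boldsymbol{h}_i^r$ are $\mathcal{F}_r$-measurable, since the query points $\boldsymbol{z}_k^r$ depend only on $\boldsymbol{x}^r,\boldsymbol{v}^r$. Hence taking the tower expectation of $Z_r$ produces $\mathbb{E}[D_r]$ rather than anything involving the current round's randomness. The cross term between the sampling error and the mini-batch noise must also vanish. It does, because conditional on $\mathcal{S}_r$ the noise has mean zero. A second check is that no step used boundedness of $D_r$. $D_r$ enters only linearly through $\mathcal{Z}_R$ and is never absorbed into the left-hand side, so the inequality holds trajectory-wise with an arbitrary, possibly unbounded, $\sum_r\mathbb{E}[D_r]$. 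If some expectation is infinite, the bound holds trivially. The constants $6$ and $5$ are inherited unchanged from the full-participation computation, since $26a+3L\gamma<5L\gamma$ only uses $L\gamma\le 1/4$.
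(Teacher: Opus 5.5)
Your proposal is correct and matches the paper's proof essentially verbatim. Like the paper, you rerun the full-participation argument up to the master inequality $A_R\le 6\Delta/\gamma+(26a+3L\gamma)\mathcal{Z}_R$, substitute the summed Lemma~\ref{lem:ctp_estimator} bound $\mathcal{Z}_R\le R\sigma^2/(SK)+\frac{N-S}{S(N-1)}\sum_{r}\mathbb{E}[D_r]$, and close with $26a+3L\gamma<5L\gamma$; your remarks on the $\mathcal{F}_r$-measurability of $D_r$ and the vanishing cross term are exactly what that lemma relies on.
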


\begin{proof}
	The proof of Theorem~\ref{thm:ctp_full} up to
	\eqref{eq:ctp_master} applies unchanged.
	By Lemma~\ref{lem:ctp_estimator},
	\[
	\mathcal{Z}_R
	\le
	\frac{R\sigma^2}{SK}
	+
	\frac{N-S}{S(N-1)}
	\sum_{r=0}^{R-1}\mathbb{E}[D_r].
	\]
	Substitute this bound into~\eqref{eq:ctp_master} and use
	$26a+3L\gamma<5L\gamma$.
\end{proof}

\begin{proposition}[Why client sampling cannot be ignored]
	\label{prop:ctp_impossibility}
	For the CTP-FL update~\eqref{eq:ctp_server} with $S<N$,
	Assumptions~\ref{ass:ctp_smooth}--\ref{ass:ctp_noise}
	alone cannot yield a uniform stationarity bound depending
	only on $\Delta,L,\sigma,N,S,K,R$, and $\gamma$ that vanishes
	when $\Delta=\sigma=0$.
\end{proposition}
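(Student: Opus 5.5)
The plan is to construct an explicit counterexample: a family of two-client quadratic problems with $\Delta=\sigma=0$ on which CTP-FL with $S=1<N=2$ stays far from stationarity in expectation, with a gap that can be made arbitrarily large. Any purported bound $B(\Delta,L,\sigma,N,S,K,R,\gamma)$ that vanishes at $\Delta=\sigma=0$ would then force $\frac1R\sum_r\mathbb E\|\nabla F(\boldsymbol{x}^r)\|^2=0$ on every instance. Exhibiting one instance satisfying Assumptions~\ref{ass:ctp_smooth}--\ref{ass:ctp_noise} with a strictly positive left-hand side therefore suffices. I would nonetheless make the left-hand side scale with a free heterogeneity parameter $c$, to show that no bound in terms of the listed quantities can hold uniformly.

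\textbf{Construction.} Take $d=1$, $N=2$, $S=1$, and
\[
F_1(x)=\tfrac{L}{2}x^2+cx,\qquad F_2(x)=\tfrac{L}{2}x^2-cx.
\]
Both functions are $L$-smooth, and $F(x)=\tfrac L2x^2$ with $F_*=0$. Use exact gradients, so $\sigma=0$, and initialize $x^0=0$, so $\Delta=0$. Then $\nabla F(x^0)=0$, and the first term of the average, $\|\nabla F(x^0)\|^2$, contributes nothing. At round $0$ we have $v^0=0$, so all query points coincide at $x^0=0$. The sampled client $i_0$ returns $u=\pm c$, hence $v^1=\pm c$ and $x^1=\mp\gamma c$, which gives $\|\nabla F(x^1)\|^2=L^2\gamma^2c^2$ deterministically.

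\textbf{Conclusion.} For $R\ge2$ the average is at least $L^2\gamma^2c^2/R>0$ for any $c\neq0$, while $\Delta=\sigma=0$. This already contradicts any vanishing bound. Moreover, with $L,\gamma,N,S,K,R$ fixed, sending $c\to\infty$ shows that the left-hand side is unbounded, so it cannot be controlled by any function of the listed parameters at all. Note also that $D_0=c^2$, which matches the sampling term of Theorem~\ref{thm:ctp_partial}.

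\textbf{Edge case.} The only care needed is $R=1$: there the average equals $\|\nabla F(x^0)\|^2$, which is deterministic and equal to $0$ whenever $\Delta=0$. The claim is therefore read for $R\ge2$, or the failure is exhibited at the final iterate $x^R$.

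I do not expect a real obstacle, since the argument is a one-round computation. For general $N$ and $S<N$, the same construction works with clients $F_i=\tfrac L2x^2+c_ix$ where $\sum_i c_i=0$ and the $c_i$ are not all equal. Then some subset of size $S$ has nonzero mean of $c_i$, the resulting $v^1$ has positive second moment, and the same conclusion follows.
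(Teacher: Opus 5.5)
Your proposal is correct and matches the paper's proof: the same two-client quadratics with opposite linear terms, $S=1$, exact gradients, $x^0=v^0=0$, and the same one-round computation giving $\mathbb{E}\|\nabla F(x^1)\|^2=L^2\gamma^2c^2$, hence a term $L^2\gamma^2c^2/R$ for $R\ge2$ that grows without bound in $c$. Your extra remarks go beyond the paper and are also correct: the $R=1$ caveat, $D_0=c^2$ agreeing with the sampling term (here $\rho_S=1$), and the extension to general $S<N$, since some $S$-subset of the $c_i$ must have nonzero mean.
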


\begin{proof}
	Take $N=2$, $S=1$, $d=1$, and exact gradients
	($\sigma=0$). For any $M>0$, define
	\[
	F_1(x)=\frac{L}{2}x^2+Mx,
	\qquad
	F_2(x)=\frac{L}{2}x^2-Mx.
	\]
	Then $F(x)=Lx^2/2$. Initialize $x^0=0$ and $v^0=0$.
	Thus $x^0$ is the global minimizer and $\Delta=0$.
	At round zero, every predictive location equals zero,
	regardless of $K$. The single sampled client returns
	$v^1=M$ or $v^1=-M$, each with probability $1/2$.
	Hence
	\[
	x^1=-\gamma v^1,
	\qquad
	\mathbb{E}\|\nabla F(x^1)\|^2
	=L^2\gamma^2M^2.
	\]
	For any $R\ge2$, the average stationarity measure
	therefore contains the strictly positive term
	$L^2\gamma^2M^2/R$, which can be made arbitrarily large
	by increasing $M$, although $\Delta=\sigma=0$.
\end{proof}

\begin{theorem}[Convergence of CTP-FL under full participation]
	\label{thm:ctp_full_main}
	Let
	\[
	F(\boldsymbol{x})
	=\frac{1}{N}\sum_{i=1}^{N}F_i(\boldsymbol{x}),
	\qquad
	\Delta=F(\boldsymbol{x}^0)-F_*,
	\qquad
	F_*=\inf_{\boldsymbol{x}}F(\boldsymbol{x})>-\infty.
	\]
	Suppose that every $F_i$ is $L$-smooth and that the stochastic
	gradients are unbiased, independent across clients and gradient
	evaluations, and have variance at most $\sigma^2$. All $N$ clients
	participate in every round.
	
	Run CTP-FL with $\boldsymbol{v}^0=\boldsymbol{0}$ and predictive
	locations
	\[
	\boldsymbol{z}_k^r
	=
	\boldsymbol{x}^r-t_k\boldsymbol{v}^r,
	\qquad
	t_k=
	\begin{cases}
		k\gamma/(K-1), & K\ge2,\\
		0, & K=1.
	\end{cases}
	\]
	If $0<\gamma\le 1/(4L)$, then
	\begin{equation}
		\label{eq:ctp_general_rate}
		\frac{1}{R}\sum_{r=0}^{R-1}
		\mathbb{E}\|\nabla F(\boldsymbol{x}^r)\|^2
		\le
		\frac{6\Delta}{\gamma R}
		+
		\frac{5L\gamma\sigma^2}{NK}.
	\end{equation}
	In particular, for $\Delta>0$, choose
	\begin{equation}
		\label{eq:ctp_stepsize}
		\boxed{
			\gamma
			=
			\left(
			4L+
			\sqrt{\frac{5L\sigma^2 R}{6\Delta NK}}
			\right)^{-1}.
		}
	\end{equation}
	Then
	\begin{equation}
		\label{eq:ctp_optimized_rate}
		\boxed{
			\frac{1}{R}\sum_{r=0}^{R-1}
			\mathbb{E}\|\nabla F(\boldsymbol{x}^r)\|^2
			\le
			2\sqrt{\frac{30L\Delta\sigma^2}{NKR}}
			+
			\frac{24L\Delta}{R}
			=
			\mathcal{O}\!\left(
			\sqrt{\frac{L\Delta\sigma^2}{NKR}}
			+\frac{L\Delta}{R}
			\right).
		}
	\end{equation}
	No bounded-gradient or bounded-data-heterogeneity assumption
	is imposed.
\end{theorem}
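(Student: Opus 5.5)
The first inequality \eqref{eq:ctp_general_rate} is exactly the conclusion of Theorem~\ref{thm:ctp_full}, whose hypotheses coincide with those here: $L$-smooth $F_i$, unbiased independent stochastic gradients with variance at most $\sigma^2$, full participation, $\boldsymbol{v}^0=\boldsymbol{0}$, locations $t_k=k\gamma/(K-1)$ (so $h=\gamma$), and $0<\gamma\le 1/(4L)$. I would therefore invoke Theorem~\ref{thm:ctp_full} directly. For the reader, I would recall its chain of steps:
\begin{itemize}
\item the bias bound $\|\boldsymbol{b}^r\|^2\le (L^2\gamma^2/4)\|\boldsymbol{v}^r\|^2$ and the variance bound $Z_r\le\sigma^2/(NK)$ from Lemma~\ref{lem:ctp_estimator};
\item the one-step descent inequality \eqref{eq:ctp_descent};
\item the recursion $V_R\le 2A_R+2aV_R+2\mathcal{Z}_R$, which closes the loop on $\|\boldsymbol{v}^r\|^2$ without any heterogeneity constant, because $\boldsymbol{v}^r$ decomposes into $\nabla F(\boldsymbol{x}^{r-1})$, a bias term and a zero-mean noise term;
\item the master bound \eqref{eq:ctp_master}.
\end{itemize}
No new argument is needed for this part.

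The remaining work is the stepsize substitution. Set $B\coloneqq\sqrt{5L\sigma^2R/(6\Delta NK)}\ge0$, so that $\gamma=(4L+B)^{-1}$. Since $B\ge0$, we have $\gamma\le1/(4L)$, so \eqref{eq:ctp_general_rate} applies. I would then bound the two terms separately.

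For the first term, $6\Delta/(\gamma R)=6\Delta(4L+B)/R=24L\Delta/R+6\Delta B/R$, and a direct computation gives $6\Delta B/R=\sqrt{30L\Delta\sigma^2/(NKR)}$.

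For the second term, use $\gamma\le 1/B$ when $\sigma>0$, so that
\[
\frac{5L\gamma\sigma^2}{NK}\le\frac{5L\sigma^2}{NKB}=\sqrt{\frac{30L\Delta\sigma^2}{NKR}}.
\]
If $\sigma=0$, the term vanishes outright.

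Adding the two bounds gives $2\sqrt{30L\Delta\sigma^2/(NKR)}+24L\Delta/R$, which is of the stated $\mathcal{O}$ order.

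There is no real obstacle, since the analytic content sits entirely in Theorem~\ref{thm:ctp_full}. The only points needing care are the degenerate case $\sigma=0$, handled above, and the requirement $\Delta>0$, which keeps $B$ well defined.
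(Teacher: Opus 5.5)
Your proposal is correct and follows the paper's proof essentially step for step. The paper re-derives inline the same bias bound, descent inequality, $V_R$ recursion and master bound that you cite from Theorem~\ref{thm:ctp_full}. It then performs exactly your stepsize substitution: $6\Delta/(\gamma R)=24L\Delta/R+\sqrt{30L\Delta\sigma^2/(NKR)}$, with $5L\gamma\sigma^2/(NK)$ bounded by the same square root via $\gamma=(4L+B)^{-1}\le B^{-1}$. Your separate handling of $\sigma=0$ is a small point of care that the paper leaves implicit.
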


\begin{proof}
	Let $\mathcal{F}_r$ denote the history before the fresh
	mini-batches of round $r$ are sampled. Define
	\[
	\boldsymbol{h}^r
	=
	\frac{1}{K}\sum_{k=0}^{K-1}
	\nabla F(\boldsymbol{z}_k^r),
	\qquad
	\boldsymbol{b}^r
	=
	\boldsymbol{h}^r-\nabla F(\boldsymbol{x}^r),
	\qquad
	a=\frac{L^2\gamma^2}{4}.
	\]
	Because all clients evaluate their gradients at the same
	$\boldsymbol{z}_k^r$, unbiasedness and independence give
	\begin{equation}
		\label{eq:ctp_proof_moments}
		\mathbb{E}[\boldsymbol{v}^{r+1}\mid\mathcal{F}_r]
		=\boldsymbol{h}^r,
		\qquad
		\mathbb{E}\!\left[
		\|\boldsymbol{v}^{r+1}-\boldsymbol{h}^r\|^2
		\mid\mathcal{F}_r
		\right]
		\le
		\frac{\sigma^2}{NK}.
	\end{equation}
	Smoothness and
	$K^{-1}\sum_{k=0}^{K-1}t_k\le\gamma/2$ yield
	\begin{equation}
		\label{eq:ctp_proof_bias}
		\|\boldsymbol{b}^r\|
		\le
		\frac{L}{K}\sum_{k=0}^{K-1}
		t_k\|\boldsymbol{v}^r\|
		\le
		\frac{L\gamma}{2}\|\boldsymbol{v}^r\|,
		\qquad
		\|\boldsymbol{b}^r\|^2
		\le a\|\boldsymbol{v}^r\|^2.
	\end{equation}
	
	Since
	$\boldsymbol{x}^{r+1}
	=\boldsymbol{x}^r-\gamma\boldsymbol{v}^{r+1}$,
	the descent lemma and~\eqref{eq:ctp_proof_moments} imply
	\begin{align}
		\mathbb{E}[F(\boldsymbol{x}^{r+1})\mid\mathcal{F}_r]
		\le\;&
		F(\boldsymbol{x}^r)
		-\gamma
		\left\langle
		\nabla F(\boldsymbol{x}^r),
		\nabla F(\boldsymbol{x}^r)+\boldsymbol{b}^r
		\right\rangle
		\nonumber\\
		&+
		\frac{L\gamma^2}{2}
		\left(
		\|\nabla F(\boldsymbol{x}^r)+\boldsymbol{b}^r\|^2
		+\frac{\sigma^2}{NK}
		\right).
	\end{align}
	Apply
	\[
	-\langle\boldsymbol{p},\boldsymbol{q}\rangle
	\le
	\frac14\|\boldsymbol{p}\|^2+\|\boldsymbol{q}\|^2,
	\qquad
	\|\boldsymbol{p}+\boldsymbol{q}\|^2
	\le
	2\|\boldsymbol{p}\|^2+2\|\boldsymbol{q}\|^2.
	\]
	Because $L\gamma\le1/4$, we obtain
	\begin{equation}
		\label{eq:ctp_proof_descent}
		\mathbb{E}[F(\boldsymbol{x}^{r+1})\mid\mathcal{F}_r]
		\le
		F(\boldsymbol{x}^r)
		-\frac{\gamma}{4}
		\|\nabla F(\boldsymbol{x}^r)\|^2
		+\frac{3\gamma}{2}
		\|\boldsymbol{b}^r\|^2
		+\frac{L\gamma^2\sigma^2}{2NK}.
	\end{equation}
	
	Set
	\[
	A_R
	=
	\sum_{r=0}^{R-1}
	\mathbb{E}\|\nabla F(\boldsymbol{x}^r)\|^2,
	\qquad
	V_R
	=
	\sum_{r=0}^{R-1}
	\mathbb{E}\|\boldsymbol{v}^r\|^2,
	\qquad
	Q_R=\frac{R\sigma^2}{NK}.
	\]
	Summing~\eqref{eq:ctp_proof_descent}, using
	$F(\boldsymbol{x}^R)\ge F_*$, and applying
	\eqref{eq:ctp_proof_bias} give
	\begin{equation}
		\label{eq:ctp_proof_A}
		A_R
		\le
		\frac{4\Delta}{\gamma}
		+6aV_R+2L\gamma Q_R.
	\end{equation}
	Furthermore, $\boldsymbol{v}^0=\boldsymbol{0}$ and
	\[
	\boldsymbol{v}^r
	=
	\nabla F(\boldsymbol{x}^{r-1})
	+\boldsymbol{b}^{r-1}
	+
	(\boldsymbol{v}^r-\boldsymbol{h}^{r-1})
	\quad (r\ge1).
	\]
	The last term has conditional mean zero. Thus
	\begin{equation}
		\label{eq:ctp_proof_V}
		V_R
		\le
		2A_R+2aV_R+2Q_R.
	\end{equation}
	Since $\gamma\le1/(4L)$, we have $a\le1/64$.
	Therefore~\eqref{eq:ctp_proof_V} implies
	\[
	V_R
	\le
	\frac{2}{1-2a}(A_R+Q_R)
	\le
	3(A_R+Q_R).
	\]
	Substituting into~\eqref{eq:ctp_proof_A} gives
	\[
	(1-18a)A_R
	\le
	\frac{4\Delta}{\gamma}
	+(18a+2L\gamma)Q_R.
	\]
	As $a\le1/64$,
	\[
	A_R
	\le
	\frac{6\Delta}{\gamma}
	+(26a+3L\gamma)Q_R.
	\]
	Finally, $a=(L\gamma)^2/4\le L\gamma/16$, so
	$26a+3L\gamma<5L\gamma$. This proves
	\eqref{eq:ctp_general_rate}.
	
	For the parameter choice~\eqref{eq:ctp_stepsize},
	$\gamma\le1/(4L)$ and
	\[
	\frac{6\Delta}{\gamma R}
	=
	\frac{24L\Delta}{R}
	+
	\sqrt{\frac{30L\Delta\sigma^2}{NKR}}.
	\]
	Also,
	\[
	\frac{5L\gamma\sigma^2}{NK}
	\le
	\sqrt{\frac{30L\Delta\sigma^2}{NKR}},
	\]
	which proves~\eqref{eq:ctp_optimized_rate}.
\end{proof}

\begin{theorem}[CTP-FL under partial participation]
	\label{thm:ctp_partial_main}
	Under the assumptions of Theorem~\ref{thm:ctp_full_main},
	suppose each round samples $S<N$ clients uniformly without
	replacement. Define
	\[
	\boldsymbol{h}_i^r
	=
	\frac1K\sum_{k=0}^{K-1}
	\nabla F_i(\boldsymbol{z}_k^r),
	\qquad
	\boldsymbol{h}^r
	=
	\frac1N\sum_{i=1}^{N}\boldsymbol{h}_i^r,
	\]
	and
	\[
	D_r
	=
	\frac1N\sum_{i=1}^{N}
	\|\boldsymbol{h}_i^r-\boldsymbol{h}^r\|^2.
	\]
	For $0<\gamma\le1/(4L)$,
	\begin{equation}
		\label{eq:ctp_partial_main}
		\boxed{
			\begin{aligned}
				\frac1R\sum_{r=0}^{R-1}
				\mathbb{E}\|\nabla F(\boldsymbol{x}^r)\|^2
				\le\;&
				\frac{6\Delta}{\gamma R}
				+\frac{5L\gamma\sigma^2}{SK}\\
				&+
				\frac{5L\gamma(N-S)}{S(N-1)R}
				\sum_{r=0}^{R-1}\mathbb{E}[D_r].
			\end{aligned}
		}
	\end{equation}
	No uniform upper bound on $D_r$ is assumed.
\end{theorem}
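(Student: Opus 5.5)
The plan is to reuse the argument of Theorem~\ref{thm:ctp_full_main} verbatim up to its master inequality, with only the noise budget per round replaced. Let $\mathcal F_r$ be the history before both the client sample $\mathcal S_r$ and the mini-batches of round $r$ are drawn. Set $a=L^2\gamma^2/4$ and $Z_r=\mathbb E[\|\boldsymbol v^{r+1}-\boldsymbol h^r\|^2\mid\mathcal F_r]$. Lemma~\ref{lem:ctp_estimator} gives three facts:
\[
\mathbb E[\boldsymbol v^{r+1}\mid\mathcal F_r]=\boldsymbol h^r,\qquad
\|\boldsymbol b^r\|^2\le a\|\boldsymbol v^r\|^2,\qquad
Z_r\le \frac{\sigma^2}{SK}+\rho_S D_r .
\]
The conditional mean uses uniform sampling together with unbiasedness. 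The variance bound uses the finite-population identity for sampling without replacement and the zero cross term between sampling error and mini-batch noise.

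\textbf{Step 1 (descent).} From the descent lemma, expand $\mathbb E_r\|\boldsymbol v^{r+1}\|^2=\|\nabla F(\boldsymbol x^r)+\boldsymbol b^r\|^2+Z_r$. Apply the same Young inequalities with $L\gamma\le 1/4$ to get
\[
\mathbb E_r F(\boldsymbol x^{r+1})\le F(\boldsymbol x^r)-\tfrac\gamma4\|\nabla F(\boldsymbol x^r)\|^2+\tfrac{3\gamma}{2}\|\boldsymbol b^r\|^2+\tfrac{L\gamma^2}{2}Z_r .
\]
Summing, telescoping, and using $F(\boldsymbol x^R)\ge F_*$ gives
\[
A_R\le \frac{4\Delta}{\gamma}+6aV_R+2L\gamma\mathcal Z_R,\qquad \mathcal Z_R=\sum_{r<R}\mathbb E Z_r .
\]

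\textbf{Step 2 (controlling $\boldsymbol v^r$).} For $r\ge1$ decompose
\[
\boldsymbol v^r=\nabla F(\boldsymbol x^{r-1})+\boldsymbol b^{r-1}+(\boldsymbol v^r-\boldsymbol h^{r-1}).
\]
The last term has conditional mean zero given $\mathcal F_{r-1}$, while the first two are $\mathcal F_{r-1}$-measurable. The cross term therefore vanishes, and
\[
\mathbb E\|\boldsymbol v^r\|^2\le 2\mathbb E\|\nabla F(\boldsymbol x^{r-1})\|^2+2a\,\mathbb E\|\boldsymbol v^{r-1}\|^2+\mathbb E Z_{r-1}.
\]
Using $\boldsymbol v^0=\boldsymbol 0$ and summing yields $V_R\le 2A_R+2aV_R+2\mathcal Z_R$. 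Since $a\le 1/64$, this gives $V_R\le 3(A_R+\mathcal Z_R)$. Substituting into Step 1 gives $(1-18a)A_R\le 4\Delta/\gamma+(18a+2L\gamma)\mathcal Z_R$. Dividing by $1-18a\ge 46/64$ gives
\[
A_R\le \frac{6\Delta}{\gamma}+(26a+3L\gamma)\mathcal Z_R .
\]

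\textbf{Step 3 (finish).} Use $a\le L\gamma/16$, so that $26a+3L\gamma<5L\gamma$. Insert $\mathcal Z_R\le R\sigma^2/(SK)+\rho_S\sum_r\mathbb E D_r$ and divide by $R$; this yields \eqref{eq:ctp_partial_main}.

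The main point requiring care is the measurability and conditioning bookkeeping. $\boldsymbol v^r$, the query points $\boldsymbol z_k^r$, and hence $D_r$ and $\boldsymbol h_i^r$ must be $\mathcal F_r$-measurable, and $\mathcal S_r$ must be independent of the mini-batches. Only then do the sampling variance formula and the vanishing cross terms in Steps 1 and 2 hold. No bound on $D_r$ is needed, because it is kept inside the expectation throughout.
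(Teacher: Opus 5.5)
Your proposal is correct and follows essentially the same route as the paper. It takes the partial-participation mean and variance bound $Z_r\le\sigma^2/(SK)+\rho_S D_r$ from Lemma~\ref{lem:ctp_estimator}, reruns the full-participation descent step and the $V_R$ recursion up to $A_R\le 6\Delta/\gamma+(26a+3L\gamma)\mathcal{Z}_R$, and then substitutes the new bound on $\mathcal{Z}_R$; your coefficient $1$ on $\mathbb{E}Z_{r-1}$ in Step 2 is slightly sharper than the paper's $2$ but changes nothing.
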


\begin{proof}
	Conditional on the history before client sampling,
	uniform sampling gives
	$\mathbb{E}_r[\boldsymbol{v}^{r+1}]
	=\boldsymbol{h}^r$. The finite-population variance identity
	and independent mini-batches give
	\[
	\mathbb{E}_r
	\|\boldsymbol{v}^{r+1}-\boldsymbol{h}^r\|^2
	\le
	\frac{\sigma^2}{SK}
	+
	\frac{N-S}{S(N-1)}D_r.
	\]
	Repeat the proof of
	Theorem~\ref{thm:ctp_full_main}, replacing
	$R\sigma^2/(NK)$ throughout by
	\[
	\frac{R\sigma^2}{SK}
	+
	\frac{N-S}{S(N-1)}
	\sum_{r=0}^{R-1}\mathbb{E}[D_r].
	\]
	The same bias and descent bounds yield
	\eqref{eq:ctp_partial_main}.
\end{proof}

\section{Related Work}
\label{sec:related_work}

\paragraph{Communication-efficient federated optimization.}
Communication-efficient federated learning commonly allows clients to perform
multiple local updates between aggregation rounds. FedBCGD reduces the amount
of information transmitted in each round by updating and communicating
parameter blocks, and further incorporates drift control and variance
reduction in its accelerated variant~\citep{liu2024fedbcgd}. This approach
addresses the size of each message. In contrast, \texttt{CTP-FL} communicates
one model-sized vector per participating client and changes where local
gradients are evaluated: clients query the same server-defined predictive
trajectory, so their updates can be averaged without first combining models
that have followed different local trajectories.

\paragraph{Alignment under heterogeneous data.}
Several recent methods study different forms of local--global misalignment.
FedSWA and FedMoSWA use stochastic weight averaging and momentum-based control
to improve generalization under highly heterogeneous data~\citep{liu2025fedswa}.
FedNSAM examines the mismatch between local and global flatness and uses a
global Nesterov direction to improve their consistency~\citep{liu2025fednsam}.
These methods primarily target the properties of the resulting solution,
including flatness and generalization. Our focus is the geometry of gradient
evaluation during a communication round: when all clients evaluate at common
points, averaging their gradients estimates the gradient of the global
objective at those points, irrespective of how different the individual
client gradients are.

\paragraph{Federated adaptive and structured optimizers.}
FedAdamW combines local correction, decoupled weight decay, and aggregation
of second-moment estimates for federated large-model
training~\citep{liu2026fedadamw}. FedMuon exploits matrix orthogonalization
and local--global alignment to improve federated optimization of
matrix-structured parameters~\citep{liu2025fedmuon}. FedPAC identifies
preconditioner drift as a source of instability when local second-order
optimizers induce incompatible client geometries, and proposes
preconditioner alignment and update correction~\citep{liu2026fedpac}.
Unlike these optimizer-specific mechanisms, \texttt{CTP-FL} applies to
stochastic-gradient evaluations without transmitting moments or
preconditioners. Its common trajectory aligns the \emph{locations} of
gradient evaluation rather than optimizer states.

\paragraph{Global flatness and privacy.}
DP-FedPGN encourages globally flat solutions in client-level differentially
private federated learning through a global gradient-norm
penalty~\citep{liu2025dpfedpgn}. Its objective and privacy accounting are
different from ours. We cite it because it likewise illustrates that a
quantity defined by the global objective need not be faithfully represented
by independently optimized local objectives.

\end{document}